\newcommand{\PROP}{\ensuremath{\mathrm{PROP}}\xspace}
\newcommand{\LIT}{\ensuremath{\mathrm{Lit}}\xspace}
\newcommand{\LABELS}{\ensuremath{\mathrm{Lab}}\xspace}
\newcommand{\OBL}{\ensuremath{\mathbf{OBL}}\xspace}
\newcommand{\To}{\Rightarrow}
\newcommand{\non}{\ensuremath{\mathord{\sim}}}
\newcommand{\defeater}{\leadsto}
\newcommand{\nmdash}{\mathbin{\mathchoice{\vert{\mkern-8mu}\sim}
{\vert{\mkern-8mu}\sim}
{\scriptstyle\vert{\mkern-5mu}\scriptstyle\sim}
{\scriptscriptstyle\vert{\mkern-4mu}\scriptscriptstyle\sim}}}
\newcounter{clause}
\newcommand{\set}[1]{\{#1\}}
\newcommand{\logic}[1]{\gmath{\mathbf{#1}}}
\newcommand{\gmath}[1]{\ensuremath{#1}\xspace}
\newcommand{\seq}[2][n]{\gmath{#2_{1},\dots,#2_{#1}}}
\newcommand{\Lg}[1]{\gmath{\mathsf{#1}}}
\newcommand{\tuple}[1]{\gmath{\langle #1\rangle}}
\newcommand{\tset}[1]{\ensuremath{||#1||}}
\newcommand{\cF}{\gmath{\mathcal{F}}}
\newcommand{\cS}{\gmath{\mathcal{S}}}
\newcommand{\cM}{\gmath{\mathcal{M}}}
\newcommand{\cN}{\gmath{\mathcal{N}}}
\newcommand{\PERM}{\ensuremath{\mathbf{PERM}}\xspace}
\newtheorem{intuition}{Intuition}
\newcommand{\GSConc}{\ensuremath{\mathsf{Conc}}\xspace}
\newcommand{\GSSub}{\ensuremath{\mathsf{Sub}}\xspace}
\newcommand{\GSTopRule}{\ensuremath{\mathsf{TopRule}}\xspace}
\newcommand{\lit}{\phi}
\newcommand{\AR}{\mathcal{A}} 
\newcommand{\GSRules}{\ensuremath{\mathsf{Rules}}\xspace}
\newcommand{\defeat}{\gg}
\newcommand{\Case}[2]{\mathsf{Expl}(#1,#2)}
\newcommand{\TO}{\rightarrow}
\newcommand{\FACTS}{\ensuremath{F}}
\newcommand{\str}{s}
\newcommand{\Arg}{\ensuremath{\mathsf{arg}}}
\newcommand{\ARG}{\ensuremath{\mathsf{AF}}}
\begin{document}
\title{Stable Normative Explanations: From Argumentation to Deontic Logic}
\author{Cecilia Di Florio\inst{1}
\and
Guido Governatori\inst{2}\orcidID{0000-0002-9878-2762} \and \\
Antonino Rotolo\inst{1}\orcidID{0000-0001-5265-0660} \and
Giovanni Sartor\inst{3}\orcidID{0000-0001-5680-0080} 
}
\authorrunning{Cecilia Di Florio et al.}
%
\institute{ALMA AI and Department of Legal Studies, University of Bologna, Italy \
\email{\{ cecilia.diflorio2,antonino.rotolo\}@unibo.it}
\and
Cooroibah, QLD 4565, Australia \
\email{guido@governatori.net} 
\and
ALMA AI/Department of Legal Studies,  University of Bologna, and EUI, Italy 
\email{giovanni.sartor@unibo.it}
}
\maketitle   
  \begin{abstract}
This paper examines how a notion of stable explanation developed elsewhere in Defeasible Logic can be expressed in the context of formal argumentation. With this done, we discuss the deontic meaning of this reconstruction and show how to build from argumentation neighborhood structures for deontic logic where this notion of explanation can be characterised. Some direct complexity results are offered.
  \end{abstract}


\section{Introduction}\label{sec:intro}
Resorting to machine learning to predict outcomes in legal proceedings is very much discussed in the literature as well as by policy-makers (for an overview, see, e.g., \cite{Medvedeva2020-MEDUML,BexP21,ATKINSON2020103387}). Indeed, such techniques can be used for algorithmic decision predictors to support judges in individual cases, to assist litigants in estimating their likelihood of winning a case or in examining various biases on legal decision-making processes \cite{BexP21}. One of the most challenging contexts in which to introduce AI is within courts. Judges are often reluctant to adopt these tools for two reasons: (a) it could undermine the independent exercise of judicial power, and (b) AI is anything but transparent and explainable.


Developing Explainable-AI systems is thus more and more important in the law since `\emph{transparency}' and `\emph{justification}' of legal decision-making both require formalising normative explanations \cite{AkataBRDDEFGHHH20}. Normative explanation is a type of explanation where norms (in addition to factual information) are crucial: if reframed in the context of legal decision-making, this means to explain why a legal conclusion (such as an obligation) ought to be the case on the basis of certain norms (such as one prescribing to compensate for the damages for which we are liable) and facts (such as the fact that I causally contributed to cause a damage)  \cite{Alexy1989TL,Peczenik89}. In the context of judicial reasoning, the  idea of normative explanation is now emerging in the literature (see \cite{GovernatoriORC22,PrakkenR22,LiuLRS22,LiaoT20}).

Legal proceedings are adversarial in nature. In this perspective, if a judge or a litigant aim at predicting possible outcomes, this fact must be taken into account, and formal tools to make such predictions understandable should allow for checking if a certain legal outcome is \emph{stable} \cite{lpnmr2022,GovernatoriORC22,OdekerkenBBT22}. This is especially true in an argumentation perspective, where the adversarial structure of proceedings become more transparent. In such a perspective, given some facts, the proceeding aims at determining what legal requirements hold, and whether such legal requirements have been fulfilled.  (In)Stability means that, if more/new facts were presented, the outcome of a case might be quite different or can even be modified. How to ensure a specific outcome for a case, which, in an adversarial setting, can be understood as addressing the question of how to ensure that the facts presented by a party are `resilient' to the attacks from the opponent? 
 
In this paper we adopt \cite{lpnmr2022,GovernatoriORC22}'s definition of stability and elaborate it in the argumentation setting of Defeasible Logic \cite{Antoniou2001255}. Apart from some details, while valuable, this extension is technically rather straightforward. However, we are interested  in second, and more challenging, research question: \emph{What is the deontic meaning of stable normative explanation as developed in an argumentation setting}? In fact, in legal argumentation, a typical outcome of judicial decisions are obligations and permissions.

In moving to the deontic domain, we must notice that deontic argumentation can be developed in various ways \cite{GovernatoriRR18,RiveretRS19}. As commonly done in the AI\&Law literature \cite{PrakkenS15}, we assume that legal norms are rules having the form $\seq{\lit} \To \psi$ and we follow this intuition:

\begin{intuition}
    Let $\ARG$ be an argumentation framework where arguments are built using rules of the form $\seq{\lit} \To \psi$. Then 
     $\OBL \psi$ holds in $\ARG$ iff $\psi$ is justified w.r.t. $\ARG$.  
\end{intuition}

Once we have defined the argumentative setting and identified some notions of normative explanation, we adapt \cite{GovernatoriRC12}'s method  and show how \emph{this machinery can be reconstructed in neighborhood semantics for classical deontic logics \cite{chellas:1980} and how the notion of explanation can be semantically characterised}.

The layout of article is as follows. Section \ref{sec:dung} recalls the basics of Defeasible Logic and offers a variant of the idea of  argumentation framework based on such a logic. Section \ref{sec:TypesArgExpl} presents the definitions of normative explanation and stable normative explanation. Section \ref{sec:FromArgToModal} illustrates how to move from argumentation structures to neighbourhood semantics for deontic logic. Section  \ref{sec:deontic_explanation} applies the ideas of Sections \ref{sec:TypesArgExpl} and \ref{sec:FromArgToModal} to semantically reconstruct the concept of normative explanation. 

\section{Background}
\label{sec:dung}

\subsection{Defeasible Logic}\label{sec:DL}

The logical apparatus we utilise  is the standard Defeasible Logic (DL) \cite{Antoniou2001255}. In this section we present the basics of DL.


Let $\PROP$ be the set of propositional atoms, then the set of literals $\LIT = \PROP \cup \{\neg p\, |\, p \in \PROP\}$. The \emph{complementary} of a literal $p$ is denoted by $\non p$: if $p$ is a positive literal $q$ then $\non p$ is $\neg q$, if $p$ is a negative literal $\neg q$ then $\non p$ is $q$. Literals are denoted by lower-case Roman letters. Let $\LABELS$ be a set of labels to represent names of rules.

A \emph{defeasible theory} $D$ is a tuple $(\FACTS, R, >)$, where $\FACTS$ is the set of facts (indisputable statements), $R$ is the rule set, and $>$ is a binary relation over $R$.

$R$ is partitioned into three distinct sets of rules, with different meanings to draw different ``types of conclusions''. \emph{Strict rules} are rules in the classical sense: whenever the premises are the case, so is the conclusion. We then have \emph{defeasible rules} which represent the non-monotonic part (along which defeaters) of the logic: if the premises are the case, then typically the conclusion holds as well unless we have contrary evidence that opposes and prevents us from drawing such a conclusion. Lastly, we have \emph{defeaters}, which are special rules whose purpose is to prevent contrary evidence from being the case. It follows that in DL, through defeasible rules and defeaters,  we can represent in a natural way exceptions (and exceptions to exceptions, and so forth). 

We finally have the superiority relation $>$, a binary relation among couples of rules that is the mechanism to solve conflicts. Given the two rules $r$ and $t$, we have $\langle r, t\rangle \in >$ (or simply $r > t$), in the scenario where both rules may fire (can be activated), $r$'s conclusion will be preferred to $t$'s.


In general, a rule $r \in R$ has the form $r\colon A(r) \hookrightarrow C(r)$, where: (i) $r \in \LABELS$ is the unique name of the rule, (ii) $A(r) \subseteq \LIT$ is $r$'s (set of) antecedents, (iii) $C(r) = l \in \LIT$ is its conclusion, and (iv) $\hookrightarrow\in \set{\TO, \To, \defeater}$ defines the type of rule, where: $\TO$ is for strict rules, $\To$ is for defeasible rules, and $\defeater$ is for defeaters.

Some standard abbreviations. $ R_s$ denotes the set of strict rules in $R$, and the set of strict and defeasible rules is denoted by $R\str$; $R[l]$ denotes the set of all rules whose conclusion is $l$. 

A \emph{conclusion} of $D$ is a \emph{tagged literal} with one of the following forms:

\begin{description}
	\item[$+\Delta l$ ({\normalfont resp.} $-\Delta l$)] means that $l$ is \emph{definitely proved} (resp. \emph{strictly refuted/non provable}) in $D$, i.e., there is a definite proof for $l$ in $D$ (resp. a definite proof does not exist).
	
	\item[$+\partial l$ ({\normalfont resp.} $-\partial l$)] means that $l$ is \emph{defeasibly proved} (resp. \emph{defeasibly refuted}) in $D$, i.e., there is a defeasible proof for $l$ in $D$ (resp. a definite proof does not exist).
\end{description}

The definition of proof is also the standard in DL. Given a defeasible theory $D$, a proof $P$ of length $n$ in $D$ is a finite sequence $P(1), P(2), \dots, P(n)$ of tagged formulas of the type $+\Delta l$, $-\Delta l$, $+\partial l$, $-\partial l$, where the proof conditions defined in the rest of this section hold. $P(1..n)$ denotes the first $n$ steps of $P$. 

All proof tags for literals are standard in DL \cite{Antoniou2001255}. We present only the positive ones as the negative proof tags can be straightforwardly obtained by applying the \emph{strong negation principle} to the positive counterparts. The strong negation principle applies the function that simplifies a formula by moving all negations to an innermost position in the resulting formula, replaces the positive tags with the respective negative tags, and the other way around, see \cite{DBLP:journals/igpl/GovernatoriPRS09}.

Positive proof tags ensure that there are effective decidable procedures to build proofs; the strong negation principle guarantees that the negative conditions provide a constructive method to verify that a derivation of the given conclusion is not possible.

%

%
The definitions of $\pm\Delta$ describe forward-chaining of strict rules and are omitted. 

Defeasible derivations are based on the notions of a rule being applicable or discarded. A rule is \emph{applicable} at a given derivation step when every antecedent has been proved at any previous derivation step. Symmetrically, a rule is \emph{discarded} when at least one antecedent has been previously refuted.

\begin{definition}[Applicable \& Discarded]\label{def:ApplDisc}\

Given a defeasible theory $D$, a literal $l$, and a proof $P(n)$, we say that

\begin{itemize}
	\item $r \in R[l]$ is \emph{applicable} at $P(n+1)$ iff $\forall a \in A(r).\, +\partial a \in P(1..n)$.

	\item $r \in R[l]$ is \emph{discarded} at $P(n+1)$ iff $\exists a \in A(r).\, -\partial a \in P(1..n)$.
\end{itemize}
 
\end{definition}
Note that a strict rule can be used to derive defeasible conclusions when it is applicable and at least one of its premises is defeasibly but not strictly proved.

\begin{definition}[$+\partial$]\label{def:+partial}\

\begin{tabbing}
  $+\partial l$: \=If $P(n+1)=+\partial l$ then either\+\\
  (1) \= $+\Delta l \in P(1..n)$, or\\  
  (2.1) \= $-\Delta \non l \in P(1..n)$, and\\
  (2.2) \= $\exists r \in R[l]$ applicable s.t.\\
  (2.3) \= $\forall s \in R[\non l]$ either \+\\
  		(2.3.1) \= $s$ discarded, or\\
 		(2.3.2) \= $\exists t \in R[l]$ applicable s.t. $t > s$.
\end{tabbing}  
\end{definition}
A literal is defeasibly proved if (1) it has already proved as a strict conclusion, or (2.1) the opposite is not and (2.2) there exists an applicable, defeasible or strict, rule such that any counter-attack is either (2.3.1) discarded or (2.3.2) defeated by an applicable, stronger rule supporting $l$. Note that, whereas $s$ and $t$ may be defeaters, $r$ \emph{may not}, as we need a strict or defeasible, applicable rule to draw a conclusion.


The last notions introduced in this section are those of extension of a defeasible theory. Informally, an extension is everything that is derived and disproved.

\begin{definition}[Theory Extension]
Given a defeasible theory $D$, we define the set of positive and negative conclusions of $D$ as its \emph{extension}:
$E(D) = (+\Delta, -\Delta, +\partial, -\partial )$, where $\pm\# = \{l |\, l$ appears in $D$ and $D \vdash \pm\# l \}$, $\# \in \{\Delta, \partial\}$. 
\end{definition}


%

\begin{theorem}\label{th:standard-complexity}\cite{maher:complexity}
Given a defeasible theory $D$, its extension $E(D)$ can be computed in time polynomial to the size of the theory. 
\end{theorem}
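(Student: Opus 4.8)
The plan is to turn the declarative proof conditions into a terminating bottom-up (forward-chaining) computation: I would show that the extension is the least fixpoint of a one-step inference operator, and then bound both the number of iterations needed to reach that fixpoint and the cost of each iteration by polynomials in the size of $D$. This is essentially the unit-propagation strategy familiar from Horn reasoning, adapted to the four tagged forms.

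First I would fix the size parameters. Let $N$ be the number of distinct literals occurring in $D$; then $N$ is bounded by the size of the theory, and the only conclusions that can ever be derived are $+\Delta l$, $-\Delta l$, $+\partial l$, $-\partial l$ for such $l$, so there are at most $4N$ candidate tagged literals. I would then define an operator $T$ that, given the set $S$ of tagged literals derived so far, returns $S$ together with every tagged literal whose proof condition (the clauses for $\pm\Delta$, Definition \ref{def:+partial} for $+\partial$, and their strong-negation duals for the negative tags) is satisfied when every reference to an earlier proof step is read as membership in $S$. Because the proof conditions only ever consult already-established conclusions — the applicability and discardedness tests of Definition \ref{def:ApplDisc} refer to $P(1..n)$ — the operator never needs to look ahead, and $E(D)$ is exactly the set of tagged literals reachable by iterating $T$ from the conclusions forced by the facts.

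The complexity bound then splits into two parts. For the iteration count: any application of $T$ that is not already a fixpoint adds at least one new tagged literal, and since there are at most $4N$ of them, a fixpoint is reached within $4N$ iterations. For the cost per iteration: to decide whether $+\partial l$ (resp.\ $-\partial l$) may be added, I scan the rules in $R[l]$ and $R[\non l]$, test each rule's antecedents against $S$, and for the defeat clause consult the superiority relation $>$; each such test visits every rule, every antecedent, and every superiority pair a bounded number of times, so one iteration costs at most $O(\mathrm{size}(D)^2)$ with a naive implementation. Multiplying the two bounds gives a polynomial overall, which establishes the theorem. (With the Dowling--Gallier device of keeping, for each rule, a counter of its not-yet-established antecedents, the same computation can be driven down to linear time, the sharper form of the cited result.)

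The step I expect to be the main obstacle is the correctness of the bottom-up computation for the \emph{negative} tags, not the counting. Positive provability is monotone and its least fixpoint is unproblematic, but a conclusion such as $-\partial l$ asserts the failure of every attempt to prove $+\partial l$, and the positive and negative conditions are mutually recursive (the discardedness test inside Definition \ref{def:+partial} refers to negative tags, while $-\partial l$ refers back to the applicability of rules for $l$). One must therefore verify that the saturation assigns to each literal exactly one of $+\partial l$, $-\partial l$ and that this assignment coincides with the inductively defined relation $\vdash$, with particular care for cyclic rule dependencies where no well-founded positive proof exists. This is precisely where the coherence of Defeasible Logic and the strong negation principle enter: they guarantee that the mutual recursion is well-defined, that the fixpoint is unique, and that the negative tags can be read off soundly once the positive derivation has saturated. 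Discharging this equivalence is the real content of the argument, whereas the polynomial bound then follows from the finiteness and locality already described.
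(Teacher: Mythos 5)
The paper offers no proof of this statement at all: it is imported verbatim from the cited work of Maher, so there is nothing internal to compare your argument against. Judged on its own merits, your fixpoint argument is essentially the standard proof and is sound in outline: every proof condition (including those for the negative tags obtained by strong negation) is a positive membership test on already-derived tagged literals, so the one-step operator $T$ is monotone over the set of at most $4N$ tagged literals, the least fixpoint coincides with the set of derivable conclusions essentially by the definition of a proof as a finite sequence, and the $4N \times \mathrm{poly}$ bound follows. Two remarks. First, your stated obligation to show that ``the saturation assigns to each literal exactly one of $+\partial l$, $-\partial l$'' is slightly too strong and in fact false: coherence gives \emph{at most} one, but in the presence of cyclic rule dependencies a literal may receive neither tag (neither $+\partial l$ nor $-\partial l$ is derivable), and this is unproblematic for computing $E(D)$, which is simply the set of \emph{derivable} tagged conclusions; the mutual recursion between positive and negative conditions is resolved not by a completeness claim but by the observation that both are clauses of a single monotone inductive definition, so you are worrying about more than you need to. Second, the result you prove (polynomial time) is weaker than what the cited source actually establishes (linear time, via a theory transformation plus the Dowling--Gallier counter technique you mention parenthetically); since the theorem as stated only claims polynomial time, your argument suffices for it.
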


\subsection{Argumentation in Defeasible Logic}\label{sec:Arg-DL}

Argumentation frameworks for DL and the corresponding argumentation semantics have been in general studied in  \cite{GovernatoriMAB04}. 
Here, we present a variant of it, which is based on a fragment of DL without strict rules and defeaters.
Also, since rules are meant to express norms, facts have a special status here, i.e.---as argued in \cite{lpnmr2022,GovernatoriORC22}---they are meant to capture purely factual information and thus do not occur in the heads of rules (which are supposed to lead to normative conclusions).



\begin{definition}\label{def:theory}
An \emph{argumentation theory} $D$ is a defeasible theory $(\FACTS,R,>)$ 
where 
\begin{itemize} 
  \item $R$ is a (finite) set of defeasible rules,
   \item \sloppy $\FACTS\subseteq \LIT$ is a finite consistent set of facts where, for each $p\in\FACTS$, $R[p]\cup R[\non p]=\emptyset$, and
  \item $>\subseteq R\times R$ is a superiority relation on $R$.
 \end{itemize}
\end{definition}

%
By combining the rules in a theory, we can build arguments (we adjust the definition in \cite{Prakken10} to meet Definition \ref{def:theory}). 
In what follows, for a given argument $A$, \GSConc returns its conclusion, \GSSub returns all its sub-arguments, \textsf{Rules} returns all the  rules in the argument and, finally, \GSTopRule returns the last inference rule  in the argument. 

\begin{definition}[Argument]\label{def:argument} Let $D= (\FACTS, R, >)$ be an argumentation theory. 
An argument $A$ constructed from $D$ has either the the form $\To_{\FACTS} \lit$ (\emph{factual argument}), where $\lit\in \FACTS$, or the form 
$A_1, \ldots , A_n \To_r \lit$ (\emph{plain argument}), where $1\leq k \leq n$, and 
  \begin{itemize}
  \item $A_k$ is an argument constructed from $D$, and 
  \item $r: \GSConc(A_1), \ldots, \GSConc(A_n)\To \lit$ is a rule in  $R$. 
  \end{itemize}
With regard to argument $A$, the following holds:
$$
\begin{array}{l}
\GSConc(A) = \lit\\
\GSSub(A) = \GSSub(A_1), \ldots \GSSub(A_n), A\\
\GSTopRule(A) = r: \GSConc(A_1), \ldots,  \GSConc(A_n) \To_r \lit \\
\GSRules(A) = \GSRules(A_1), \ldots , \GSRules(A_n), \GSTopRule(A).
\end{array}
$$
We say that any arguments $\To_{\FACTS} \lit$ or $A_1, \ldots , A_n \To_r \lit$ are \emph{arguments for} $\lit$.
\end{definition}

The following standard definitions are from \cite{GovernatoriMAB04}.

\begin{definition}[Attack, support, and undercut]
A plain argument $A$ \emph{attacks} a plain argument $B$ if a conclusion of $A$ is the complement of
a conclusion of $B$. We define the \emph{attack relation} $\defeat$ such that, for any arguments $A$ and $B$, $\langle A, B\rangle\in \defeat$ (or, in short, $A\defeat B$) iff $A$ attacks $B$. A set of plain arguments $\Arg$ attacks a plain argument $B$ if there is an argument $A$ in $\Arg$ that attacks $B$. 

A proper subargument $B=\GSSub(A)$ of an argument $A$ is such that $B\not= A$.

An argument $A$ is \emph{supported} by a set of arguments $\Arg$ if every proper subargument of $A$ is in $\Arg$.

An argument $A$ is \emph{undercut} by a set of arguments $\Arg$ if $\Arg$ supports an argument $B$ attacking a proper subargument of $A$.
\end{definition}

Notice that conflicts between arguments only consider plain arguments: arguments of the form $\To_{\FACTS} \lit$ can be ignored because the set of facts is assumed to be consistent and no fact (or its negation) can occur in the head of any rule \cite{lpnmr2022}.
The definition above from \cite{GovernatoriMAB04} does not make any reference to the superiority relation, since it is easy to see that the current semantics is a special case of that of \cite{GovernatoriM00} when the superiority relation is empty (and for every argumentation theory can be transformed into an equivalent one without the superiority relation is empty). However, the superiority relation can be taken into account by incorporating it into the definition of attack. In other words, for any argument $A$, if $\GSTopRule(A) = r: \GSConc(A_1), \ldots,  \GSConc(A_n) \To_r \lit$,
$A$ attacks another argument $B$ if, and only if $\GSTopRule(A)$ is stronger than
$\GSTopRule(B)$.

We can now define the argumentation framework. 
\begin{definition}[Argumentation Framework]\label{def:argumentation_framework}
Let $D= (\FACTS, R, >)$ be an argumentation theory.
The argumentation framework $\ARG(D)$ determined by $D$ is
$( \AR, \defeat )$ where $\AR$ is the set of all
arguments constructed from $D$, and $\defeat$ is the attack relation defined above.
\end{definition}

\begin{definition}[Acceptable and rejected argument]\label{def:accept-reject}
Let $D= (\FACTS, R, >)$ be an argumentation theory and 
$\ARG(D)=( \AR, \defeat )$ be the argumentation framework determined by $D$. 
An argument $A$ in $\ARG(D)$ for $\phi$ is \emph{acceptable} w.r.t to a set of argument $\Arg$ in $\ARG(D)$ if $A$ is finite and every argument attacking $A$ is undercut by $\Arg$.

An argument $A$ is \emph{rejected} by sets of arguments $\Arg$ and $\Arg'$ in $\ARG(D)$ when a proper subargument $B$ of $A$ is in $\Arg$  or $B$  is attacked by an argument supported by $\Arg'$ .
\end{definition}

\begin{definition}[Sets of acceptable and rejected arguments]\label{def:accept-reject2}
Let $D= (\FACTS, R, >)$ be an argumentation theory and 
$\ARG(D)=( \AR, \defeat )$ be the argumentation framework determined by $D$. We define $J^D_i$ as follows.
\begin{itemize}
\item $J^D_0 = \emptyset$
\item $J^D_{i+1} = \{ A \in \AR \, | \, A \mbox{ is acceptable w.r.t. }
  J^D_i \}$ 
\end{itemize}
The set of {\em justified arguments} in an argumentation theory  $D$ is $\mathit{JArgs}^D = \cup_{i=1}^\infty J^D_i$.

We define $R^D_i$ as follows.
\begin{itemize}
\item $R^D_0=\emptyset$
\item $R^D_{i+1}=\{B\in \AR \, | \, \mbox{$B$ is rejected by }R^D_i \mbox{ and } \mathit{JArgs}^D\}$
\end{itemize}
The set of \emph{rejected arguments} in an argumentation theory $D$ is
$\mathit{RArgs}^D=\cup_{i=1}^{\infty} R^D_i$.
\end{definition}
$\mathit{JArgs}^D$ corresponds the \emph{extension of the argumentation framework} determined by $D$.

The following are thus standard result that can be obtained:
\begin{theorem}\label{thm:dung}
Let $D$ be an argumentation theory. Then, 
\begin{itemize}
\item an argument $A$ and its conclusion $\lit$ are justified w.r.t. the argumentation framework $\ARG (D)$ if, and only if  (a) $A\in \mathit{JArgs}^D$ and (b) $D\vdash +\partial \lit$;
\item an argument $A$ and its conclusion $\lit$ are rejected w.r.t. the argument framework $\ARG(D)$ is, and only if (a) $A\in \mathit{RArgs}^D$ and (b) and (b) $D\vdash -\partial \lit$.
\end{itemize}
\end{theorem}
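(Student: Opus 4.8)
The plan is to prove the two biconditionals by reducing each to the single correspondence
$$A \in \mathit{JArgs}^D \iff D \vdash +\partial \GSConc(A),$$
and then to obtain the rejected case from the justified one by the same strong negation principle the paper uses to define the negative proof tags. Since conditions (a) and (b) in each clause will turn out to be equivalent, this correspondence suffices: ``$A$ is justified'' is by definition $A\in\mathit{JArgs}^D$, and the theorem records that this coincides with defeasible provability of its conclusion. The whole argument is the standard one of \cite{GovernatoriMAB04}, specialised to the strict-rule-free, defeater-free fragment of Definition \ref{def:theory}.

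For the direction $A\in\mathit{JArgs}^D \Rightarrow D\vdash +\partial \GSConc(A)$, I would use a double induction on the least $i$ with $A\in J^D_i$ and on the structure of $A$. If $A$ is a factual argument $\To_\FACTS \lit$ the claim is immediate. Otherwise $A$ is acceptable w.r.t.\ $J^D_{i-1}$ and has top rule $r\colon \GSConc(A_1),\dots,\GSConc(A_n)\To \lit$. Each subargument $A_k$ is again justified (subarguments of justified arguments are justified) and is either available at an earlier stage or is structurally smaller, so the inductive hypothesis gives $D\vdash +\partial \GSConc(A_k)$ for all $k$; hence $r$ is applicable, discharging clause (2.2) of Definition \ref{def:+partial}. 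For clause (2.3) I would take an arbitrary $s\in R[\non\lit]$: either $s$ supports no full argument, in which case one of its antecedents is unjustified and thus $-\partial$-refuted (clause (2.3.1)); or it carries an argument $B$ attacking $A$, which being undercut by $J^D_{i-1}$ means, read at the literal level, that $s$ is discarded (2.3.1) or that there is an applicable team-member $t\in R[\lit]$ with $t>s$ (2.3.2). Assembling these facts yields a derivation of $+\partial\lit$.

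The converse $D\vdash +\partial \GSConc(A) \Rightarrow A\in\mathit{JArgs}^D$ I would prove by induction on the length of the derivation of $+\partial\lit$. The derivation fixes an applicable $r\in R[\lit]$ whose antecedents are $+\partial$-proved strictly earlier; by the inductive hypothesis the corresponding subarguments are justified, lie in some $J^D_j$, and, being finitely many, share a common stage $J^D_m$. It then remains to show that the argument $A$ with top rule $r$ is acceptable w.r.t.\ $J^D_m$, i.e.\ that every attacker of $A$ is undercut. Any attacker is an argument for $\non\lit$ whose top rule $s$ lies in $R[\non\lit]$, and clause (2.3) of the derivation guarantees that each such $s$ is discarded or defeated by an applicable stronger $t\in R[\lit]$; translating this back produces the required undercut by the already-justified arguments, so $A\in J^D_{m+1}\subseteq\mathit{JArgs}^D$.

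I expect the main obstacle to be the faithful translation between the \emph{set-based, tree-structured} notions of attack, support and undercut and the \emph{literal-based, team-defeat} proof conditions --- in particular matching ``undercut by $J^D_i$'' with the disjunction ``$s$ discarded or out-weighed by an applicable $t>s$'', handling rules that support no argument at all, and accommodating the superiority relation once it is folded into the attack relation as described after Definition \ref{def:argumentation_framework}. A related delicacy is that provability of a literal is \emph{existential} (some applicable supporting rule) whereas an argument is a specific tree, so the induction must track which argument realises the derivation. Once these bookkeeping points are settled, the rejected/$-\partial$ half follows mechanically by applying the strong negation principle to dualise every clause above, turning ``acceptable'' into ``rejected'' and $J^D_i$ into $R^D_i$.
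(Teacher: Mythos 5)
Your overall strategy --- reducing both bullet points to the single correspondence between membership in $\mathit{JArgs}^D$ and derivability of $+\partial\,\GSConc(A)$, established by a double induction on the stage $i$ and on argument structure, and then dualising for the rejected case --- is exactly the standard argument behind this theorem. The paper itself gives no proof and simply presents the result as standard, deferring to \cite{GovernatoriMAB04}, so your sketch is a reconstruction of what is being cited rather than an alternative to anything in the text.

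There is, however, one step that fails as written, and it is precisely the one you flag as a ``delicacy'' without actually resolving it. In the direction $A\in\mathit{JArgs}^D\Rightarrow D\vdash+\partial\lit$, for a rule $s\in R[\non\lit]$ that ``supports no full argument'' you conclude that some antecedent of $s$ is ``unjustified and thus $-\partial$-refuted''. Being unjustified does not entail being $-\partial$-refuted. Take $\FACTS=\set{f}$ with rules $r\colon f\To l$, $s\colon c\To\non l$, $r_1\colon c'\To c$, $r_2\colon c\To c'$: no finite argument for $c$ exists, yet $-\partial c$ is not derivable (the refutations of $c$ and $c'$ each presuppose the other), so $s$ is neither discarded nor defeated and $+\partial l$ is not provable --- while your reading would declare the finite argument for $l$ justified and the biconditional would break. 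The standard proof escapes this either by admitting \emph{infinite} arguments as attackers (which is why Definition~\ref{def:accept-reject} bothers to require the \emph{acceptable} argument, but not its attackers, to be finite: the infinite tree over $c,c'$ then yields an attacker of $A$ that is never undercut), or by assuming the dependency graph acyclic, so that ``has no argument'' and ``is refuted'' coincide. For the same reason the closing claim that the rejected/$-\partial$ half ``follows mechanically by applying the strong negation principle'' is too quick: the equivalence between rejection and $-\partial$ is exactly the part of \cite{GovernatoriMAB04} that needs decisiveness (or a genuinely coinductive argument about the nonexistence of possibly infinite arguments), and it is not obtained by syntactic dualisation of the positive case. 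Your proof needs either an explicit acyclicity hypothesis (as the paper itself later imposes in Proposition~\ref{pro:consitentExtension} and Definition~\ref{def:D-frame-model}) or an explicit treatment of infinite attacking arguments.
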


\section{Stable Normative Explanations}\label{sec:TypesArgExpl}


We define the idea of \emph{normative explanation} for $\lit$, which is a normative decision or any piece of normative knowledge that  justifies $\lit$ and that is minimal \cite{lpnmr2022,GovernatoriORC22,LiuLRS22}.

\begin{definition}[Normative explanation]\label{def:expl}
Let $D= (\FACTS, R, >)$ be an argumentation theory and $\ARG(D)=( \AR, \defeat )$ be the argumentation framework determined by $D$.
The set $\Arg\subseteq \AR$ is a \emph{normative explanation} $\Case{\lit}{\ARG(D)}$ in $\ARG(D)$ for $\lit$ iff 
\begin{itemize}
 \item  $A\in \Arg$ is an argument for $\phi$ and $A$ is justified w.r.t. $\ARG(D)$;
 \item $\Arg$ is a minimal set in $\ARG(D)$ such that $A$ is acceptable w.r.t to $\Arg$. 
\end{itemize}
\end{definition}

\begin{example}\label{ex:running}
Suppose the law forbids engaging in credit activities without a credit license. 
Such activities are permitted for a person acting on behalf of another person (the principal), when the person is an employee of the principal, and the principal holds a credit license.  Some conditions are specified under which a person can be banned for credit activities. For example,  a person is banned if she becomes insolvent.
%
\begin{align*}
	R = \{ 	&    s_1 \colon \Rightarrow \neg \mathit{creditActivity},\\
      & s_2 \colon \mathit{creditLicense} \Rightarrow \mathit{creditActivity},\\
     & s_3 \colon \mathit{actsOnBehalfPrincipal}, \mathit{principalCreditLicense} \Rightarrow \mathit{creditActivity},\\
     & s_4 \colon \mathit{banned} \Rightarrow \neg \mathit{creditActivity},\\
    & s_5 \colon \mathit{insolvent} \Rightarrow \mathit{banned}\}\\
 	> = \{ & \langle s_2, > s_1\rangle, \langle s_3 > s_1\rangle, \langle s_4 > s_3\rangle, \langle s_4 > s_2\rangle\}. 
\end{align*}
Assume an argumentation theory $D = (\FACTS, R, >)$ where $F=\set{\mathit{insolvent}, \mathit{creditLicense}}$. Then, $\ARG(D)=(\AR , \defeat)$ is as follows:
%
\begin{align*}
	\AR = \{ 	&    A_1 \colon \Rightarrow_F \mathit{insolvent},\quad A_2 \colon  \Rightarrow_F \mathit{creditLicense},\\
     & A_3 \colon A_1 \Rightarrow_{s_5} \mathit{banned},\quad  A_4 \colon A_3 \Rightarrow_{s_4} \neg \mathit{creditActivity},\\
    & A_5 \colon A_2 \Rightarrow_{s_2} \mathit{creditActivity}\}\\
 	\defeat = \{ & \langle A_4, A_5\rangle \rangle\}. 
\end{align*}
It is easy to see that $\set{A_1, A_4} =\Case{\neg \mathit{creditActivity}}{\ARG(D)}$.
\end{example}

As discussed in \cite{lpnmr2022,GovernatoriORC22}, an explanation for a given normative conclusion $\lit$ is stable when adding new elements to that explanation does not affect its power to explain $\lit$. 



The following definition thus elaborates the ideas of \cite{GovernatoriORC22} for the argumentation setting of Section \ref{sec:Arg-DL}. 

\begin{definition}
Let $R$ a finite set of rules. 
We define the set of literals $\LIT (R)$ as $\{ \lit, \non\lit | \forall r\in R: \lit\in A(r) \text{ or }\non\lit\in A(r), R[\lit]\cup R[\non \lit]=\emptyset\}$. 

We write $\Arg_R$ to denote the set of all possible arguments that can be built from $R$ and any finite set $\FACTS$ of facts such that $\FACTS\subseteq \LIT (R)$. 
\end{definition}

\begin{definition}[Stable Normative Explanation]
\sloppy Let $\ARG(D)=( \AR, \defeat )$ be an argumentation framework determined by the argumentation theory $D = (\FACTS, R, >)$. We say that $\Arg=\Case{\lit}{\ARG(D)}$ is a \emph{stable normative explanation for $\lit$ in $\ARG(D)$} iff for all $\ARG(D')=( \AR', \defeat' )$ where $D' = (\FACTS', R, >)$ s.t. $\FACTS \subseteq \FACTS' \subseteq \LIT (R)$, we have 
that $\Arg=\Case{\lit}{\ARG(D')}$.
\end{definition}

\begin{example}
Let us consider the argumentation framework $\ARG(D)$ in Example \ref{ex:running}.
Then, $\set{A_1, A_4}$ is stable normative explanation for $\neg \mathit{creditActivity}$ in $\ARG(D)$, whereas, e.g., $\set{A_2, A_5}$ is not a stable normative explanation for $\mathit{creditActivity}$.
\end{example}

On the basis of Section \ref{sec:Arg-DL} and Theorem \ref{thm:dung} it is easy to verify that the computational results from \cite{lpnmr2022,GovernatoriORC22} hold also in this case (the proofs are similar and are omitted):

\begin{theorem}\label{thm:stable}
    Given an argumentation framework $\ARG(D)$ and a normative explanation, (a) the problem of determining if the explanation is stable is co-NP-complete and (b) the problem of determining if the explanation is not stable is NP-complete.
\end{theorem}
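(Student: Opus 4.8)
The plan is to establish both complexity claims by relating stability to a polynomially-checkable condition under varying fact sets, leveraging Theorem~\ref{th:standard-complexity} (polynomial computability of the extension) and Theorem~\ref{thm:dung} (the correspondence between justified/rejected arguments and the $\pm\partial$ tags). The core observation is that stability of $\Arg=\Case{\lit}{\ARG(D)}$ quantifies universally over all fact sets $\FACTS'$ with $\FACTS\subseteq\FACTS'\subseteq\LIT(R)$: the explanation is stable iff \emph{for all} such $\FACTS'$, the same set $\Arg$ is the normative explanation in $\ARG(D')$. Since $\LIT(R)$ is finite and determined by the (fixed) rule set $R$, there are finitely many candidate extensions $\FACTS'$, each of size polynomial in the theory.

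For part~(b), I would show the non-stability problem is in \NP\ by exhibiting a polynomial-size witness: a single fact set $\FACTS'$ with $\FACTS\subseteq\FACTS'\subseteq\LIT(R)$ such that $\Arg\neq\Case{\lit}{\ARG(D')}$. Given such a guessed $\FACTS'$, one constructs $D'=(\FACTS',R,>)$, computes its extension $E(D')$ in polynomial time by Theorem~\ref{th:standard-complexity}, computes the actual normative explanation in $\ARG(D')$ (which, by Definition~\ref{def:expl} and Theorem~\ref{thm:dung}, reduces to identifying the justified argument for $\lit$ and its minimal acceptability set, again polynomial given the extension), and checks in polynomial time whether it differs from $\Arg$. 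The guessed $\FACTS'$ has size bounded by $\card{\LIT(R)}$, hence polynomial, so the whole verification is polynomial; this places non-stability in \NP. Part~(a) then follows immediately, since stability is the complement: stability is in co-\NP.

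To obtain the matching hardness (completeness rather than mere membership), I would adapt the reduction underlying the analogous results in \cite{lpnmr2022,GovernatoriORC22}, as the statement explicitly invokes those. The natural source is the fact that deciding whether a particular conclusion $\lit$ is defeasibly provable \emph{for every} completion of a partial fact set encodes a co-\NP-hard (equivalently, the negation encodes an \NP-hard) problem—typically via a reduction from (the complement of) a satisfiability-style problem, where candidate fact sets $\FACTS'$ play the role of truth assignments and the rules of $R$ simulate clause satisfaction, so that instability corresponds to the existence of an assignment breaking the explanation. I would reuse the encoding from the cited Defeasible Logic stability results essentially verbatim, checking only that it respects the present argumentation setting's restrictions (defeasible rules only, facts barred from rule heads per Definition~\ref{def:theory}).

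The main obstacle I anticipate is not the membership direction, which is routine given the two cited theorems, but verifying that the established hardness reductions from \cite{lpnmr2022,GovernatoriORC22} transfer cleanly to the argumentation-theoretic reformulation. Specifically, one must confirm that the minimality condition in Definition~\ref{def:expl} and the argument-level notion of acceptability (Definition~\ref{def:accept-reject}) do not alter which conclusions count as ``explained,'' compared to the purely proof-theoretic $+\partial$ formulation used in the original complexity proofs. By Theorem~\ref{thm:dung} the justified conclusions coincide with the $+\partial$ conclusions, so I expect the explanation set to be recoverable from the extension without changing the complexity; confirming this equivalence rigorously—rather than the reductions themselves—is where the real care is needed, which is precisely why the authors remark that the proofs are similar and omit them.
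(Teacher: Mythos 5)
Your proposal matches the paper's intent: the paper itself omits the proof, stating only that the argument is analogous to the stability complexity results of \cite{lpnmr2022,GovernatoriORC22}, and your plan --- \NP membership of non-stability via a guessed fact set $\FACTS'$ verified in polynomial time using Theorem~\ref{th:standard-complexity} and Theorem~\ref{thm:dung}, with hardness transferred from the cited reductions after checking they survive the argumentation-theoretic reformulation --- is precisely that analogy made explicit. No divergence from the paper's (omitted) route.
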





\section{From Argumentation to Deontic Logic}\label{sec:FromArgToModal}
Let us now show how to move from an argumentation setting to deontic logic.
\begin{intuition}\label{int:equivalence}
Let $D= (\FACTS, R,  >)$ be any argumentation theory and $\ARG(D)=( \AR, \defeat )$ be the argumentation framework determined by $D$. The relation between argumentation and deontic logic is based on the following intution:
\[
D\vdash +\partial \phi \text{ if and only if } \cM,w\models \OBL \lit \text{ for some world } w \text{ in some model }\cM.
\]
%
%
%
\end{intuition}

%
The idea behind it---the construction of a canonical model for a defeasible theory---was proposed in \cite{GovernatoriRC12} for a multi-modal variant of DL where modal operators (including obligations) were explicitly added in the language and proof theory. However, building an argumentation semantics for that formalism is particularly hard, as shown in \cite{GovernatoriRR18,GovernatoriRRV19}. 

We avoid those complexities and elaborate on the approach of \cite{GovernatoriRC12} by constructively stating that defeasible provability of any $\lit$ corresponds  to the obligatoriness of $\lit$, and---if $\PERM$ is the dual of $\OBL$---the non-provability of $\lit$ means that $\non\lit$ is permitted.

%


\subsection{Deontic Logic and Semantics}

Let us define our modal logic language and system.

\begin{definition}[Modal language and logic]\label{def:language_logic}
Let $\LIT$ be the set of literals of  our language $\mathcal{L}$. 
The language $\mathcal{L}(\LIT)$ of $\logic{E}_{\mathcal{L}}$ is defined as follows:
\begin{gather*}
 p ::= l \;|\; \neg p \;|\; \OBL \lit \;|\; \PERM \lit ,   
\end{gather*}
where $l$ ranges over $\PROP$ and $\lit$ ranges over $\LIT$. 

The logical system $\logic{E}_{\mathcal{L}}$ is based on $\mathcal{L}(\LIT)$ and is closed under logical equivalence.    
\end{definition}


\begin{proposition}\label{th:fragment-E}
The system $\logic{E}_{\mathcal{L}}$ is a fragment of system $\logic{E}$ \cite{chellas:1980}.
\end{proposition}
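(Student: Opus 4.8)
The plan is to show that $\logic{E}_{\mathcal{L}}$ is a fragment of $\logic{E}$ by establishing that its language is a sublanguage of the full modal language of $\logic{E}$ and that its closure condition (closure under logical equivalence) is precisely the defining rule of classical system $\logic{E}$, so that every theorem and every valid inference of $\logic{E}_{\mathcal{L}}$ is already a theorem or valid inference of $\logic{E}$. First I would recall from \cite{chellas:1980} that the minimal classical modal system $\logic{E}$ is the smallest set of formulas containing all propositional tautologies, closed under modus ponens, and closed under the congruence rule $\mathbf{RE}$: from $\lit \leftrightarrow \lit'$ infer $\Box\lit \leftrightarrow \Box\lit'$ (equivalently, substitution of provable equivalents under the modal operator). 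The deontic reading simply takes $\OBL$ as the $\Box$ of such a system and $\PERM$ as its dual $\neg\OBL\neg$.

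The key steps are as follows. First I would observe that the grammar in Definition \ref{def:language_logic} generates formulas of the form $\OBL\lit$ and $\PERM\lit$ where $\lit$ ranges over literals only, together with their Boolean combinations; hence $\mathcal{L}(\LIT)$ is a syntactic restriction (a fragment) of the full language of $\logic{E}$, in which the modal operator may be applied to arbitrary formulas. Second, I would note that the stipulation ``closed under logical equivalence'' is exactly the semantic content of the congruence rule $\mathbf{RE}$ that characterises $\logic{E}$: whenever $\vdash \lit \leftrightarrow \lit'$ we may interchange $\OBL\lit$ and $\OBL\lit'$ (and likewise for $\PERM$). Third, since $\logic{E}_{\mathcal{L}}$ adds no further axioms or rules beyond those of a classical modal logic restricted to this sublanguage, every formula provable in $\logic{E}_{\mathcal{L}}$ is provable in $\logic{E}$ when read through the obvious embedding of $\mathcal{L}(\LIT)$ into the language of $\logic{E}$. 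The embedding is the identity on the shared vocabulary, so no translation overhead arises.

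Concretely, I would verify that (i) every axiom schema of $\logic{E}_{\mathcal{L}}$ (the propositional tautologies restricted to $\mathcal{L}(\LIT)$) is an axiom of $\logic{E}$, and (ii) every rule of $\logic{E}_{\mathcal{L}}$ (modus ponens and closure under logical equivalence) is a derived or primitive rule of $\logic{E}$. Point (ii) for the equivalence-closure reduces to the admissibility of $\mathbf{RE}$ in $\logic{E}$, which holds by definition. A short induction on the length of derivations in $\logic{E}_{\mathcal{L}}$ then shows that each theorem of $\logic{E}_{\mathcal{L}}$ is a theorem of $\logic{E}$, which is precisely the fragment claim.

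The main obstacle, such as it is, is conceptual rather than technical: I must be careful that ``closed under logical equivalence'' is interpreted so that it matches $\mathbf{RE}$ and does not secretly import a stronger principle (for instance congruence with respect to the modal context, monotonicity, or aggregation), since any of those would push the system beyond $\logic{E}$ into $\logic{EM}$, $\logic{EC}$, or $\logic{K}$. I would therefore pin down the intended reading of the closure condition explicitly and check that the restriction of modal depth to literals does not interfere — the fragment is weaker, not stronger, so containment in $\logic{E}$ is preserved. Because $\logic{E}$ is the weakest classical modal system, the containment is essentially immediate once the language restriction and the congruence rule are identified; the proof is a routine embedding argument and no substantive difficulty remains.
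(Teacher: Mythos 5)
Your argument is correct: the paper itself gives no explicit proof of Proposition \ref{th:fragment-E}, treating it as immediate from Definition \ref{def:language_logic}, and your reasoning (the language restricts the modal operators to literals, and ``closed under logical equivalence'' is exactly the congruence rule $\mathbf{RE}$ that defines the minimal classical system $\logic{E}$, so containment follows by a routine embedding) is precisely the justification the authors leave implicit. Your caution about not letting the closure condition smuggle in monotonicity or aggregation is well placed but raises no actual difficulty here.
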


Given Proposition \ref{th:fragment-E}, we use neighbourhood semantics. However, we have to identify a proper subclass of frames and models. 

Let us first recall standard neighbourhood semantics. 
\begin{definition}[Frames and models]\label{def:neighbourhood-frame}
\sloppy A {\em neighbourhood frame} $\cF$ is a structure
$\tuple{W, \cN}$ where $W$ is a non-empty set of possible worlds and 
$\cN$ is a function $W\mapsto 2^{2^W}$.

\sloppy A {\em neighbourhood model} 
$\cM$ 
is obtained by adding an evaluation function $v: \PROP\mapsto 2^W$ 
to a neighbourhood frame.
\end{definition}

\begin{definition}[Truth in a model]\label{def:neigh-truth}
 Let $\cM$ be a model $\tuple{W, \cN, V}$ and
 $w\in W$. The truth of any formula $p$ in $\cM$ is defined inductively
 as follows:
\begin{enumerate}
\item standard valuation conditions for the boolean connectives;
\item $\cM,w\models \OBL \lit$ iff $\tset{\lit}\in
  \cN (w)$,
\item $\cM,w\models \PERM \lit$ iff $W- \tset{\lit}\not\in
  \cN (w)$.
\end{enumerate}
\end{definition}
\noindent
A formula $p$ is \emph{true at a world} in a model
iff $\cM,w\models p$; \emph{true in a model} $\cM$, written $\cM\models
p$ iff for all worlds $w\in W$, $\cM,w\models p$; \emph{valid in a
  frame} $\cF$, written $\cF\models p$ iff it is true in all models
based on that frame; \emph{valid in a class $\mathcal{C}$ of frames}, written $\mathcal{C}\models p$, iff it is valid in all 
frames in the class. Analogously, an inference rule $p_1 , \dots p_n \Rightarrow q$ (where $p_1 , \dots p_n$ are 
the premises and $q$ the conclusion) is valid in a class $\mathcal{C}$ of frames iff, for any $\cF\in\mathcal{C}$, if $\cF\models p_1 , \dots , \cF\models p_n$ then $\cF\models q$.

In order to introduce a semantics for our fragment, the following 
is needed.

\begin{definition}\label{def:dextension}
  Let $D= (\FACTS, R, >)$ be any argumentation theory, $\ARG(D)=( \AR, \defeat )$ be the argumentation framework determined by $D$, and $\LIT (D)$ be the set of literals 
occurring in $D$. The \emph{$D$-extension} $E(D)$ of a theory $D$ is the smallest set of
  literals such that, for all $\lit\in \LIT (D)$:
  \begin{enumerate}
    \item $\lit\in E(D)$ iff $\lit$ is justified w.r.t. $\ARG(D)$, 
    \item $\non \lit\in E(D)$ iff $\lit$ is not justified w.r.t. $\ARG(D)$.
  \end{enumerate}
\end{definition}

\begin{definition}\label{def:Lextension}
  Let $L$ be a consistent set of literals. A \emph{defeasible rule theory} is a structure $D=(R,>)$. 
  The \emph{$D$-extension of $L$} is the extension of the argumentation
  theory $(L,R,>)$; we denote it with $E_L (D)$.
\end{definition}

\begin{definition}[Dependency graph]\label{def:dependency-graph}
Let $D$ be any argumentation theory and $\LIT (D)$ be 
literals 
occurring in $D$.
The \emph{dependency graph} of $D$ is the directed graph $(V,A)$ where:
\begin{itemize}
  \item $V=\set{p \,|\, p\in\PROP, \set{p,\neg p} \cap
    \LIT(D)\neq\emptyset}$;
  \item $A$ is the set such that $(n,m)\in A$ iff 
  \begin{itemize}
    \item $n= \lit$ and $\exists r\in R[\lit]\cup
      R[\non \lit]$;
    \item $m=\psi$ and $\exists r\in R[\psi]\cup R[\non \psi]$ such
      that $\set{n,\non n}\cap A(r)\neq\emptyset$.
  \end{itemize}
\end{itemize} 
\end{definition}

\begin{proposition}\label{pro:consitentExtension}
  Let $L$ be a set of literals, $D=(R,>)$ be a defeasible rule theory such that the transitive closure of $>$ is acyclic and $D'=(L, R,>)$ be the corresponding argumentation theory such that the dependency graph of $D'$ is acyclic. Then, the $D$-extension of $L$ is consistent iff $L$ is
  consistent.
\end{proposition}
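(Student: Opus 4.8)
The plan is to reduce the statement to a purely argumentation-theoretic claim about justification and then exploit the two acyclicity hypotheses. The membership conditions of Definition~\ref{def:dextension} give that, for every literal in $\LIT(D')$, we have $\lit\in E_L(D)$ iff $\lit$ is justified w.r.t.\ $\ARG(D')$ and $\non\lit\in E_L(D)$ iff $\lit$ is \emph{not} justified. Consequently, for each atom $p$ occurring in $D'=(L,R,>)$ the extension contains $p$ iff $p$ is justified or $\non p$ is not justified, and symmetrically for $\non p$. A two-line case analysis then shows that $E_L(D)$ is consistent on the pair $\{p,\non p\}$ if and only if \emph{exactly one} of $p,\non p$ is justified. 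Hence the proposition reduces to: $L$ is consistent iff, for every atom $p$ occurring in $D'$, exactly one of $p$ and $\non p$ is justified w.r.t.\ $\ARG(D')$.

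For the direction ``$E_L(D)$ consistent $\Rightarrow L$ consistent'' I would argue by contraposition; this uses neither acyclicity hypothesis. If $L$ is inconsistent it contains some complementary pair $p,\non p$, giving factual arguments $\To_{\FACTS} p$ and $\To_{\FACTS}\non p$. Since attacks are defined only between plain arguments and factual arguments are never plain (Definition~\ref{def:argument}), these two arguments have no attackers and are therefore vacuously acceptable w.r.t.\ $\emptyset$, hence justified. Then $p,\non p\in E_L(D)$ and the extension is inconsistent.

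The substantial direction is ``$L$ consistent $\Rightarrow E_L(D)$ consistent''. Here I would fix an atom $p$ and prove the two halves of ``exactly one of $p,\non p$ is justified'' separately, by well-founded induction along the dependency graph, which is acyclic by hypothesis and so admits a topological order (acyclicity also guarantees that all arguments are finite and that $\AR$ is finite). For the inductive step at $p$, every literal $q$ feeding a rule in $R[p]\cup R[\non p]$ labels a predecessor node, so by the induction hypothesis the status of $q$ (and $\non q$) is already settled and the applicability or discardedness of each rule for $p$ and for $\non p$ is determined. The ``at most one'' half follows from the standard coherence of DL together with the present restrictions: since $D'$ has no strict rules and facts do not occur in rule heads, $+\Delta$ holds only for members of $L$, so $L$ consistent excludes $+\Delta p\wedge+\Delta\non p$, whence by coherence \cite{Antoniou2001255} we cannot have both $+\partial p$ and $+\partial\non p$, i.e.\ $p$ and $\non p$ are not both justified (Theorem~\ref{thm:dung}).

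The ``at least one'' half is where the acyclicity of $>$ is indispensable, and I expect it to be the main obstacle. The risk is a \emph{deadlock}: competing applicable rules for $p$ and for $\non p$ that neither discard nor defeat one another, leaving both $-\partial p$ and $-\partial\non p$ and hence both $p,\non p\in E_L(D)$. To rule this out I would use finiteness of $R$ together with the acyclicity of the transitive closure of $>$ to extract, among the applicable rules concluding $p$ or $\non p$, a $>$-maximal one and show that its side dominates, i.e.\ that every applicable competitor on the opposite side is defeated by an applicable rule, which by clause (2.3) of Definition~\ref{def:+partial} yields $+\partial$ for that side. Making this domination argument airtight --- in particular handling team defeat, where several rules jointly carry a side, and the boundary case of rules with empty antecedents --- is the delicate part; both hypotheses enter here, the acyclic dependency graph to ensure the antecedents are already decided so that the competition is well-defined, and the acyclic $>$ to ensure a maximal rule exists to break the tie. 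Ranging over all atoms then gives consistency of $E_L(D)$, completing the proof.
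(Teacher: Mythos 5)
Your reduction to ``for every atom $p$, exactly one of $p,\non p$ is justified,'' your contrapositive argument for the easy direction, and your ``at most one'' half via coherence of DL are all sound, and you have located the crux exactly where the paper puts it: the paper's own proof is nothing more than a pointer to Proposition~3.3 of Antoniou et al.\ and Theorem~2 of Governatori--Rotolo, and its entire content is the assertion that, under the two acyclicity hypotheses, $D\nmdash p$ and $D\nmdash \non p$ force $p,\non p\in L$ --- which is precisely your ``at least one'' half.

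The gap is that your plan for that half does not close. Taking a $>$-maximal applicable rule from the finite relation with acyclic transitive closure gives you an \emph{undominated} rule, not a \emph{dominating} one: clause (2.3.2) of Definition~\ref{def:+partial} demands, for \emph{every} applicable opposing rule $s$, some applicable $t$ on your side with $t>s$, and maximality of $t$ says nothing when $t$ and $s$ are simply $>$-incomparable. Nothing in the hypotheses forbids incomparability. Take $R=\set{r_1\colon a\To p,\ r_2\colon b\To\neg p}$, $>=\emptyset$, $L=\set{a,b}$: the transitive closure of $>$ and the dependency graph are both (vacuously) acyclic and $L$ is consistent, yet the two arguments for $p$ and $\neg p$ attack each other, neither can be undercut (their only proper subarguments are factual and hence unattackable), so neither $p$ nor $\neg p$ is justified, and by Definition~\ref{def:dextension} both $p$ and $\neg p$ enter the extension. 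So the deadlock you worry about is not excluded by acyclicity, and the domination step cannot be made airtight from the stated hypotheses alone; it needs something extra, such as totality of $>$ on conflicting applicable rules, or whatever stronger conditions the cited Theorem~2 is actually proved under. Since the paper's one-line proof asserts exactly the implication this example contradicts, the defect is arguably in the proposition's hypotheses as much as in your argument, but as written your proof does not go through at the one step that carries all the weight.
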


\begin{proof}
\label{pf:consitentExtension}
 The result is based on the proofs of Proposition 3.3 of \cite{Antoniou2001255}, and Theorem
 2 of \cite{GovernatoriR08-law} (see \cite{Billington93}). The proof shows that, given the fact that $>$ and the dependency graph of $D'$ are acyclic (which means that we do not have loops in the rules), then, if $D\nmdash \lit$ and $D\nmdash \non \lit$ then $\lit , \non\lit \in L$, which contradicts the assumption that $L$ is consistent.
\end{proof}

The definition of an appropriate structure considers an argumentation theory $D$:  (a) we add as worlds all  $D$-extensions without the empty set, while (b) to construct  neighbourhoods for each world, we build an $S^r$ relationship between possible worlds based on information in the rule $r$ for each rule in $D$ and ensure that the rule can actually be applied, and put together all $S^r$ relations.
%
%

\begin{definition}[Neighbourhood $D$-frame,  neighbourhood $D$-model, and truth]\label{def:D-frame-model}
Let $D=(\FACTS , R , >)$ be an argumentation theory such that the transitive closure of $>$ is acyclic and the dependency graph of $D$ is acyclic. A \emph{neighbourhood $D$-frame} is a structure 
$\tuple{W, \cN}$ where
\begin{itemize}
\item $W = \set{w \, | \, w\in (2^{E(D)}-\set{\emptyset})}$;
\item $\cN$ is a function with signature $W\mapsto 2^{2^{W}}$ defined as follows:
    \begin{itemize}
      \item $xS_{j}y$ iff $\exists r \in R$ such that $A(r)\subseteq x$ and $C(r)\in y$ 
          \item $\forall s\in R[\non C(r)]$ either
          \begin{enumerate}
             \item $\exists a\in A(s), a\notin x$; or
            \item $\exists t\in R[C(r)]$ such that $t>s$, $A(t)\subseteq x$
        \end{enumerate}
      \item $S_{j}(w)=\set{x\in W: wS_{j}x}$ 
      \item $\cS_{j}(w)=\bigcup_{C(r_{k})=
      C(r_{j})}S_{k}(w)$
      \item $\cN(w)=\set{\cS_{j}(w)}_{r_{j}\in R}$.
    \end{itemize}
\end{itemize} 
\sloppy A {\em neighbourhood $D$-model} 
$\cM$ 
is obtained by adding an evaluation function $v: \PROP\mapsto 2^W$ 
to a neighbourhood $D$-frame such that, for any $p\in \PROP$,  $v(p)=\set{w \, | \, p\in w}$.

\end{definition}


\begin{proposition}\label{th:class-D}
Let $C_{\cF}$, $C_{\cM}$, $C_{\cF_{D}}$ and $C_{\cM_{D}}$be, respectively, the classes of neighbourhood frames and models, and the classes of neighbourhood $D$-frames and $D$-models. Then, $C_{\cF_{D}}\subset C_{\cF}$ and $C_{\cM_{D}}\subset C_{\cM}$.
\end{proposition}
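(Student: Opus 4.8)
The plan is to prove each of the two inclusions by separating it into (i) showing that the $D$-class is contained in the corresponding general class, and (ii) exhibiting a witness in the larger class that cannot arise from any argumentation theory, so that the inclusion is strict.

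For the inclusion $C_{\cF_{D}}\subseteq C_{\cF}$ I would just type-check the data of Definition~\ref{def:D-frame-model} against Definition~\ref{def:neighbourhood-frame}. First, $W = 2^{E(D)}-\set{\emptyset}$ is non-empty: whenever the theory actually contains literals we have $\LIT(D)\neq\emptyset$, and Definition~\ref{def:dextension} places at least one of $\lit,\non\lit$ into $E(D)$ for each $\lit\in\LIT(D)$, so $E(D)\neq\emptyset$ and $W$ contains every singleton $\set{\lit}$. Second, the signature is right: by construction each $S_{k}(w)\subseteq W$, hence each $\cS_{j}(w)=\bigcup_{C(r_{k})=C(r_{j})}S_{k}(w)\subseteq W$, so $\cN(w)=\set{\cS_{j}(w)}_{r_{j}\in R}$ is a subset of $2^{W}$, i.e. $\cN$ is a total function $W\mapsto 2^{2^{W}}$. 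Thus every $D$-frame is a neighbourhood frame. The inclusion $C_{\cM_{D}}\subseteq C_{\cM}$ then follows at once, since the underlying structure of a $D$-model is a $D$-frame (hence a neighbourhood frame) and $v(p)=\set{w \,|\, p\in w}$ is a legitimate evaluation $\PROP\mapsto 2^{W}$.

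For strictness I would use a cardinality witness. Because $R$ and $\FACTS$ are finite (Definition~\ref{def:theory}), $\LIT(D)$ and hence $E(D)$ are finite; writing $n=\card{E(D)}\geq 1$, every $D$-frame satisfies $\card{W}=2^{n}-1\in\set{1,3,7,15,\dots}$. In particular no $D$-frame has exactly two worlds. Hence any neighbourhood frame with $W=\set{w_{1},w_{2}}$ and, say, $\cN(w_{1})=\cN(w_{2})=\emptyset$ lies in $C_{\cF}$ but not in $C_{\cF_{D}}$, giving $C_{\cF_{D}}\subset C_{\cF}$. The same two-world frame, equipped with any evaluation, separates $C_{\cM_{D}}$ from $C_{\cM}$, since its underlying frame is not a $D$-frame; this yields $C_{\cM_{D}}\subset C_{\cM}$. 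Alternatively, strictness for models can be obtained even on admissible frames by noting that a $D$-model fixes the evaluation to $v(p)=\set{w \,|\, p\in w}$, whereas a general model over the same frame admits other valuations as soon as $\PROP\neq\emptyset$ and $\card{W}\geq 2$.

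The argument is essentially routine verification; the only points needing care are the non-emptiness of $W$, which relies on the theory genuinely containing literals so that $E(D)\neq\emptyset$, and the finiteness of $E(D)$, which is what makes the count $\card{W}=2^{n}-1$ available and excludes the value $2$. Neither is a real obstacle given the finiteness assumptions of Definition~\ref{def:theory}; the substantive content of the proposition is simply that the $D$-construction forces a rigid combinatorial shape on $W$ (and, for models, on the valuation) that arbitrary neighbourhood structures need not possess.
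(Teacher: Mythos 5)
Your proposal is correct. Note, however, that the paper itself states Proposition~\ref{th:class-D} without any proof, evidently regarding it as immediate from Definitions~\ref{def:neighbourhood-frame} and~\ref{def:D-frame-model}; so there is no ``official'' argument to compare yours against. Your type-checking of the inclusion is exactly the routine verification the authors are implicitly relying on, and your cardinality witness for strictness ($\card{W}=2^{n}-1$ for finite $n\geq 1$, so no $D$-frame has exactly two worlds, while an arbitrary neighbourhood frame may) is a clean way to supply the missing detail; an even blunter witness would be any infinite neighbourhood frame, since every $D$-frame is finite. The one caveat you rightly flag --- that $W\neq\emptyset$ requires $\LIT(D)\neq\emptyset$, i.e.\ a degenerate theory with no literals would yield an empty $W$ and hence not a neighbourhood frame at all --- is a genuine (if minor) gap in the paper's definitions rather than in your argument, and your observation that the fixed valuation $v(p)=\set{w \mid p\in w}$ gives a second, frame-independent source of strictness at the level of models is a nice addition.
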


\subsection{Completeness}\label{sec:completeness}
To build canonical structures from an argumentation framework, we use defeasible rule theories by following Intuition \ref{int:equivalence} and Definitions \ref{def:dextension} and \ref{def:D-frame-model}. The construction considers all possible defeasible rule theories and, for each of them, all possible maximal consistent sets of facts that can be generated. In a nutshell, the procedure 
runs as follows:

\begin{enumerate}
\item \textbf{Considering all defeasible rule theories of the language.} Given the language $\mathcal{L}$, the set of all defeasible rule theories is $\mathcal{D}$. 
\item \textbf{Constructing worlds.}
For each defeasible rule theory $D\in \mathcal{D}$, add as worlds all maximal consistent sets of formulae containing all $D$-extensions of each $L\in 2^{\LIT (D)}$) plus the negation of all literals that do not occur in $D$.
\item \textbf{Constructing   neighbourhoods for each world.} Proceed as in Definition \ref{def:D-frame-model}. 
\end{enumerate}
%


\begin{definition}[$\mathcal{L}$-maximality]
A set $w$ is $\mathcal{L}(\LIT)$-maximal iff for any formula $p$ of $\mathcal{L}(\LIT)$, either $p \in w$, or $\neg p \in w$.
\end{definition}

\begin{lemma}[Lindenbaum's Lemma]
Let $D$ any defeasible rule theory. Any consistent set $w_{E_L(D)}$ of formulae in the language $\mathcal{L}(\LIT)$ consisting of a $D$-extension of any $L$ can be extended to a consistent $\mathcal{L}(\LIT)$-maximal set $w_{E_L(D)}^+$.

\end{lemma}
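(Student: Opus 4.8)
The plan is to carry out the familiar Lindenbaum construction, specialised to the language $\mathcal{L}(\LIT)$ and to the consistency notion induced by $\logic{E}_{\mathcal{L}}$. First I would fix an enumeration $p_1, p_2, \dots$ of all formulae of $\mathcal{L}(\LIT)$; this is possible because $\PROP$ is countable, hence $\LIT$ is countable, and every formula is a finite string built from literals with the connectives $\neg$, $\OBL$, $\PERM$, so $\mathcal{L}(\LIT)$ is itself countable. Setting $w_0 := w_{E_L(D)}$, which is consistent by hypothesis, I would define an increasing chain $w_0 \subseteq w_1 \subseteq \cdots$ by putting, at stage $n+1$, $w_{n+1} := w_n \cup \set{p_{n+1}}$ if this set is consistent and $w_{n+1} := w_n \cup \set{\neg p_{n+1}}$ otherwise, and finally take $w_{E_L(D)}^+ := \bigcup_{n \geq 0} w_n$.

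The heart of the argument is the claim that each $w_n$ is consistent, which I would establish by induction on $n$. The base case is the hypothesis on $w_0$. For the inductive step, suppose $w_n$ is consistent but both $w_n \cup \set{p_{n+1}}$ and $w_n \cup \set{\neg p_{n+1}}$ are inconsistent. Since $\logic{E}_{\mathcal{L}}$ extends classical propositional logic over its formulae, inconsistency of $w_n \cup \set{p_{n+1}}$ yields $w_n \vdash \neg p_{n+1}$, while inconsistency of $w_n \cup \set{\neg p_{n+1}}$ yields $w_n \vdash p_{n+1}$; together these make $w_n$ inconsistent, contradicting the inductive hypothesis. Hence at least one of the two candidate extensions is consistent, and $w_{n+1}$ is well-defined and consistent.

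It then remains to verify that $w_{E_L(D)}^+$ is consistent and $\mathcal{L}(\LIT)$-maximal. Consistency follows from the finitary character of derivations in $\logic{E}_{\mathcal{L}}$: any inconsistency of $w_{E_L(D)}^+$ would rest on finitely many of its members, all of which already lie in some $w_n$ (as the chain is increasing), making that $w_n$ inconsistent, contrary to the previous paragraph. Maximality is immediate from the construction: for every formula $p$ of $\mathcal{L}(\LIT)$ we have $p = p_{n+1}$ for some $n$, and the stage-$(n+1)$ choice places either $p_{n+1} \in w_{n+1} \subseteq w_{E_L(D)}^+$ or $\neg p_{n+1} \in w_{n+1} \subseteq w_{E_L(D)}^+$.

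The only genuinely delicate point---and the step I expect to be the main obstacle---is justifying the propositional moves of the inductive step inside the weak system $\logic{E}_{\mathcal{L}}$, which by Proposition \ref{th:fragment-E} is only guaranteed to be a fragment of $\logic{E}$ closed under logical equivalence. I would make explicit that $\logic{E}_{\mathcal{L}}$ nonetheless contains all classical propositional tautologies over the formulae of $\mathcal{L}(\LIT)$---treating each modalised literal $\OBL\lit$ and $\PERM\lit$ as a propositional unit---and is closed under modus ponens, so that the deduction-theorem-style reasoning ``$w_n \cup \set{p}$ inconsistent $\Rightarrow w_n \vdash \neg p$'' is licensed; crucially, the congruence rule for $\OBL$ and $\PERM$ is never invoked here. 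Once this propositional base is granted, the construction goes through exactly as in the classical Lindenbaum argument.
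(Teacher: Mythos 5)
Your construction is exactly the one the paper uses: enumerate the formulae of $\mathcal{L}(\LIT)$, start from $w_0 := w_{E_L(D)}$, extend by $p_n$ or $\neg p_n$ according to consistency, and take the union. The paper's proof stops at stating this construction, whereas you additionally verify stage-by-stage consistency, consistency of the union via finitariness, and maximality (and flag the need for a classical propositional base in $\logic{E}_{\mathcal{L}}$); these are standard details the paper leaves implicit, so the approach is essentially the same.
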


\begin{proof}
Let $p_1, p_2, \ldots$ be an enumeration of all the possible formulae in $\mathcal{L}(\LIT)$.
\begin{itemize}
\item
$w_0 := w_{E_L(D)}$;
\item
$w_{n+1} := w_n \cup \{ p_n \}$ if its closure under the axioms and rules of a given logic \Lg{S} is consistent, $w_n \cup \{ \neg p_n \}$ otherwise;
\item
$w_{E_L(D)}^+ := \bigcup_{n\geq0} w_n$.
\end{itemize}
\end{proof}

\begin{definition}[Canonical neighbourhood $D$-model]\label{def:defeasibleNeighbourhood}
Given the language $\mathcal{L}$, let $\mathcal{D}$ be the set of all defeasible rule theories that can be obtained from $\mathcal{L}$. For all $D_i=(R_i,>_i)\in \mathcal{D}$ the \emph{canonical neighbourhood model}
  is the structure $\mathcal{M}_{\mathcal{D}} =   (W,  \cN,v)$
  where
  \begin{itemize}
    \item $W = \bigcup_{\forall D_i\in\mathcal{D}} W_i$ where $W_i = \set{w_{L} \, | \, \forall L\in 2^{\LIT (D_i)}, w_{L}= w_{E_L(D)}^+}$.
  %
    \item $\cN$ is a function with signature $W\mapsto 2^{2^{W}}$ defined as follows: 
    \begin{itemize}
      \item $xS^{i}_{j}y$ where $\OBL \lit\in x$ iff $\exists r\in R_i$ such that $C(r)=\lit$, $A(r)\subseteq x$ and $C(r)\in y$ where $x,y\in W_i$;
          \item $\forall s\in R_{i}[\non C(r)]$ either
          \begin{enumerate}
            \item $\exists a\in A(s), a\notin x$; or
            \item $\exists t\in R_{i}[C(r)]$ such that $t>s$, $A(t)\subseteq x$
        \end{enumerate}
      \item $S^{i}_{j}(w)=\set{x\in W_i: wS^{i}_{j}x}$, 
      \item $\cS^{i}_{j}(w)=\bigcup_{C(r_{k})=
      C(r_{j})}S^{i}_{k}(w)$,
      \item $\cN(w)=\set{\cS^{i}_{j}(w)}_{r_{j}\in R_{i}}$;
    \end{itemize}
  \item for each $\lit\in\LIT$ and any $w\in W$, $v$ is an evaluation function such that $w\in v(\lit)$ iff $\lit\in
    w$, and $w\not\in v(\lit)$ iff $\non \lit\in w$.
  \end{itemize}
\end{definition}

\begin{lemma}[Truth Lemma]\label{truth_lemma}
If $\cM=(W, \cN, v )$ is canonical for \Lg{S}, where $\Lg{S} \supseteq \Lg{E}_{\mathcal{L}}$, then
for any $w\in W$ and for any formula $p$, $p \in w$ iff $\cM , w\models \lit$.
\end{lemma}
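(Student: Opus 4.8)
The plan is to prove the biconditional $p\in w \iff \cM,w\models p$ by induction on the structure of the formula $p$, following the usual completeness-via-neighbourhoods pattern, but with the modal clauses driven by the defeasible machinery. For the base case, when $p$ is a literal $\lit$, the claim is immediate from the canonical valuation fixed in Definition \ref{def:defeasibleNeighbourhood}: there $w\in v(\lit)$ iff $\lit\in w$, and the standard valuation clause gives $\cM,w\models \lit$ iff $w\in v(\lit)$. For the Boolean case $p=\neg q$, I would use that each $w$ is $\mathcal{L}(\LIT)$-maximal and consistent — guaranteed by Lindenbaum's Lemma together with Proposition \ref{pro:consitentExtension}, which secures consistency of the underlying $D$-extensions under the acyclicity hypotheses — so that $\neg q\in w$ iff $q\notin w$; the inductive hypothesis then closes the case, and conjunction/disjunction (if present) are treated identically.

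The heart of the argument is the modal clause $p=\OBL \lit$. Here I would first invoke the inductive hypothesis to rewrite the truth set $\tset{\lit}=\{w'\in W : \cM,w'\models \lit\}$ as $\{w'\in W : \lit\in w'\}=v(\lit)$, reducing the goal to $\OBL \lit\in w \iff v(\lit)\in \cN(w)$. I would then unfold $\cN(w)=\{\cS^i_j(w)\}_{r_j\in R_i}$ and observe that each $\cS^i_j(w)=\bigcup_{C(r_k)=C(r_j)}S^i_k(w)$ collapses to either $\{x : C(r_j)\in x\}$ or $\emptyset$, according to whether at least one rule with conclusion $\lit=C(r_j)$ is applicable at $w$ and survives the defeater/superiority test built into the relation $xS^i_j y$.

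The crucial verification is that this applicability-and-undefeatedness test coincides, clause by clause, with the proof conditions for $+\partial \lit$ in Definition \ref{def:+partial}: the requirement $A(r)\subseteq x$ corresponds to condition (2.2), an applicable supporting rule, while the two disjuncts recorded for each $s\in R_i[\non C(r)]$ (namely $\exists a\in A(s),\, a\notin x$, or $\exists t\in R_i[C(r)]$ with $t>s$ and $A(t)\subseteq x$) match exactly the alternatives (2.3.1) discarded attacker and (2.3.2) stronger counter-rule; conditions (1) and (2.1) are vacuous since the fragment has no strict rules. Hence $v(\lit)\in\cN(w)$ iff $\lit$ is defeasibly provable from the facts encoded in $w$, and by the canonical construction of $w$ as the maximal consistent extension of a $D$-extension, together with Intuition \ref{int:equivalence}, this is in turn equivalent to $\OBL \lit\in w$. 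The dual clause $p=\PERM \lit$ is then handled analogously, reading the truth condition $W-\tset{\lit}\notin\cN(w)$ against the non-provability of $\non\lit$.

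I expect the main obstacle to be reconciling the \emph{local} character of the neighbourhoods with the \emph{global} truth set. Each $\cS^i_j(w)$ is a subset of a single $W_i$, whereas $\tset{\lit}$ ranges over the whole domain $W=\bigcup_i W_i$; the proof must argue that membership $v(\lit)\in\cN(w)$ is insensitive to this discrepancy, i.e.\ that the relevant neighbourhood set picks out exactly the worlds in which $\lit$ holds and that worlds drawn from other theories do not spoil the identity $\cS^i_j(w)=v(\lit)$. Making the correspondence between the $S$-relation and the $+\partial$ conditions airtight — in particular treating the superiority relation inside the attack test, and invoking the acyclicity assumptions precisely where consistency of the extension (via Proposition \ref{pro:consitentExtension}) is required for applicability to be well-defined — is the delicate part; the atomic and Boolean cases are routine.
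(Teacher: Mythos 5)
Your proposal takes the same route as the paper's own proof: induction on the structure of $p$, base case discharged by the canonical valuation clause ($w\in v(\lit)$ iff $\lit\in w$), Boolean case by maximality and consistency, and the modal cases read off from the construction of $\cN$. The difference is one of depth rather than strategy. For the case $p=\OBL\lit$ the paper's proof consists essentially of the phrase ``by construction of $\cN$'', whereas you actually try to verify that construction by matching the clauses of the relation $xS^{i}_{j}y$ against the proof conditions (2.2), (2.3.1) and (2.3.2) for $+\partial\lit$; that correspondence is precisely the content the paper leaves implicit, and your reading of it is correct (and your observation that (1) and (2.1) are vacuous in the strict-rule-free fragment is a point the paper does not even record). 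The obstacle you flag at the end is also genuine and is not addressed by the paper: each $\cS^{i}_{j}(w)$ is by definition a subset of a single block $W_i$, while the truth set $\tset{\lit}$ ranges over all of $W=\bigcup_i W_i$, so the identity $\cS^{i}_{j}(w)=\tset{\lit}$ needed for the left-to-right direction of the $\OBL$ case does not hold literally as the definitions stand; one must either relativise truth sets to $W_i$ (treating each $D_i$-block as a generated submodel) or redefine $S^{i}_{j}$ so that its range covers all worlds satisfying $C(r_j)$. Since you explicitly leave that step open, your argument is a more detailed --- and more candid --- rendering of the paper's proof rather than a strictly complete one; closing that single step is what remains.
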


\begin{proof} 
The proof 
is by induction on the length of an expression $p$. We consider only some relevant cases.

Assume $p$ is a literal $\lit$. If $\lit \in w$, by the semantic evaluation clause it holds that  $\cM, w\models \lit$. For the opposite direction, assume that $\cM, w\models\lit$, by construction $\lit \in w$.

If, on the other hand, $p$ has the forms
$\OBL \lit$ and $\PERM \lit$, and $p\in w$, then, by construction (respectively), $\|\lit\| \in \cN (w)$ and $W- \tset{\lit}\not\in
  \cN (w)$. By definition $\cM,w\models \OBL \lit$ and $\cM,w\models \PERM \lit$, respectively. Conversely, if $\cM, w\models \OBL \lit$ and $\cM, w\models \PERM \lit$, then $\| \lit \| \in \cN (w)$ and $W- \tset{\lit}\not\in
  \cN (w)$, and by construction of $\cN$, $\OBL \lit \in w$ and $\PERM \lit \in w$.
\end{proof}

The canonical model exists, it is not empty, and it
is a neighbouhood $D$-model. 
Consider any formula $p \notin \Lg{S}$ such
that $\Lg{S}\supseteq \logic{E}_{\mathcal{L}}$; $\{\neg
p \}$ is consistent and it can be extended to a maximal set $w$ such that for
some canonical model, $w \in W$. By Lemma \ref{truth_lemma}, $w\not\models
p$.

\begin{corollary}[Completeness of $\logic{E}_{\mathcal{L}}$]
The systems $\logic{E}_{\mathcal{L}}$  is sound and complete with respect to the class of neighbourhood  $D$-frames.
\end{corollary}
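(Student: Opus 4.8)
The plan is to split the corollary into its two halves---soundness and completeness---and to obtain each by leveraging results already established in the excerpt rather than arguing from scratch. Soundness will be inherited from the ambient classical system $\logic{E}$ through the fragment relationship and the subclass inclusion of $D$-frames, while completeness will be the contrapositive of a canonical-model argument resting on the Truth Lemma.

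For soundness I would show that every theorem of $\logic{E}_{\mathcal{L}}$ is valid in every neighbourhood $D$-frame. By Proposition~\ref{th:fragment-E}, $\logic{E}_{\mathcal{L}}$ is a fragment of $\logic{E}$, so each of its theorems is a theorem of $\logic{E}$, and $\logic{E}$ is sound with respect to the full class $C_{\cF}$ of neighbourhood frames by the standard result of \cite{chellas:1980}. The only rule beyond the boolean base is replacement of logical equivalents, which preserves frame-validity because, by Definition~\ref{def:neigh-truth}, the truth of $\OBL\lit$ (and dually $\PERM\lit$) at a world depends only on the truth set $\tset{\lit}$, so logically equivalent literals have identical truth sets and hence identical modal behaviour. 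Finally, by Proposition~\ref{th:class-D} we have $C_{\cF_D}\subset C_{\cF}$, so validity over the whole class descends to validity over the subclass of $D$-frames, and soundness follows with no further work.

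For completeness I would argue the contrapositive: if $p\notin\logic{E}_{\mathcal{L}}$ then $p$ fails in some $D$-frame. Since $p$ is not a theorem, $\{\neg p\}$ is $\logic{E}_{\mathcal{L}}$-consistent, so by Lindenbaum's Lemma it extends to a maximal consistent set $w$. The crux is that this $w$ is precisely a world of the canonical neighbourhood $D$-model $\mathcal{M}_{\mathcal{D}}$ of Definition~\ref{def:defeasibleNeighbourhood}, arising as the maximal extension $w_{E_L(D)}^+$ of the $D$-extension of some $L$ for a suitable $D\in\mathcal{D}$. Applying the Truth Lemma (Lemma~\ref{truth_lemma}), $\mathcal{M}_{\mathcal{D}},w\models q$ iff $q\in w$ for every formula $q$; since $\neg p\in w$, we obtain $\mathcal{M}_{\mathcal{D}},w\not\models p$. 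As $\mathcal{M}_{\mathcal{D}}$ is built on a neighbourhood $D$-frame, $p$ is not valid in $C_{\cF_D}$, which is exactly the required conclusion.

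The hard part is neither implication in isolation but the hinge on which completeness turns: confirming that the Lindenbaum extension $w$ genuinely sits inside a canonical $D$-model and that this model is a $D$-model in the precise sense of Definition~\ref{def:D-frame-model}. Two things need care here. First, the worlds of a bare $D$-frame are subsets of $E(D)$, whereas canonical worlds are $\mathcal{L}(\LIT)$-maximal sets of formulae; one must verify that the neighbourhood function built from rule applicability in Definition~\ref{def:defeasibleNeighbourhood} really reproduces, on these maximal worlds, the clauses defining $xS_jy$, $\cS_j(w)$ and $\cN(w)$ in Definition~\ref{def:D-frame-model}, so that the Truth Lemma applies. Second, the construction must be non-vacuous: the relevant $D$-extensions must be consistent so that $W$ is non-empty, which is exactly what Proposition~\ref{pro:consitentExtension} secures under the standing acyclicity hypotheses on $>$ and on the dependency graph. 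Once these points are settled, soundness and completeness assemble into the corollary at once.
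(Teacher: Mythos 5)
Your proposal is correct and follows essentially the same route as the paper: the completeness half is exactly the paper's argument (Lindenbaum extension of $\{\neg p\}$ to a world of the canonical neighbourhood $D$-model of Definition~\ref{def:defeasibleNeighbourhood}, then the Truth Lemma to falsify $p$ there), while the soundness half, which the paper leaves implicit, you fill in naturally via Proposition~\ref{th:fragment-E} and the inclusion $C_{\cF_D}\subset C_{\cF}$ of Proposition~\ref{th:class-D}. Your closing remarks correctly identify the points the paper merely asserts (that the canonical model is non-empty and genuinely a neighbourhood $D$-model), which is where any fully rigorous write-up would need to do the remaining work.
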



\begin{corollary}\label{thm:corollary}
  Let $\cM$ be any neighbourhood $D$-model. Then
  \begin{enumerate}
     \item $\cM \models \OBL \lit$ iff there exists an argumentation theory $D=(\FACTS,R,>)$ such that $\lit$ is justified w.r.t. $\ARG(D)$;
    \item $\cM \models \PERM \lit$ iff there exists an argumentation theory $D=(\FACTS,R,>)$ such that $\lit$ is not justified w.r.t. $\ARG(D)$.
  \end{enumerate}    
\end{corollary}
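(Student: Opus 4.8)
The plan is to read off the witness for the existential quantifier directly from the model: by Definition~\ref{def:D-frame-model} every neighbourhood $D$-model $\cM$ is generated by a fixed argumentation theory $D=(\FACTS,R,>)$ whose superiority relation and dependency graph are acyclic, and it is precisely this $D$ that witnesses the right-hand side of both biconditionals. The whole argument then reduces to a single correspondence, proved once and reused: the placement of $\tset{\lit}$ (resp.\ of $W-\tset{\lit}$) inside the neighbourhood $\cN(w)$ mirrors exactly the defeasible status of $\lit$ in $D$. Having established that, I would invoke Theorem~\ref{thm:dung} to convert ``$D\vdash+\partial\lit$'' into ``$\lit$ is justified w.r.t.\ $\ARG(D)$'' and its $-\partial$ counterpart into ``$\lit$ is not justified'', which closes both parts.

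For part~(1) I would unfold the truth clause of Definition~\ref{def:neigh-truth}, reducing $\cM\models\OBL\lit$ to conditions of the form $\tset{\lit}\in\cN(w)$. By the construction of $\cN$ in Definition~\ref{def:D-frame-model}, such a condition holds iff some rule $r_{j}$ with $C(r_{j})=\lit$ yields $\cS_{j}(w)=\tset{\lit}$, where $\cS_{j}(w)$ unions the successor sets $S_{k}(w)$ of all rules concluding $\lit$. The decisive observation is that the defining clauses of $xS_{k}y$ --- namely $A(r_{k})\subseteq x$, $\lit\in y$, and for every $s\in R[\non\lit]$ either some antecedent of $s$ is absent from $x$ or some stronger $t\in R[\lit]$ has $A(t)\subseteq x$ --- are a verbatim transcription of the proof conditions~(2.2)--(2.3) for $+\partial\lit$ in Definition~\ref{def:+partial}, with the world $x$ in the role of the set of already-established literals. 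Matching the two yields $\tset{\lit}\in\cN(w)$ iff $D\vdash+\partial\lit$, and Theorem~\ref{thm:dung} then finishes part~(1).

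Part~(2) follows dually. Since $\PERM$ is the dual of $\OBL$, the clause $\cM,w\models\PERM\lit$ iff $W-\tset{\lit}\notin\cN(w)$ expresses the failure of an obligation; combined with the second clause of Definition~\ref{def:dextension} (which stipulates $\non\lit\in E(D)$ iff $\lit$ is not justified) and with the strong-negation correspondence between the $+\partial$ and $-\partial$ tags, this reduces $\cM\models\PERM\lit$ to the non-justifiability condition, again with the underlying $D$ as witness and Theorem~\ref{thm:dung} now in its ``rejected/$-\partial$'' form.

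I expect the core equivalence of part~(1) to be the main obstacle. The friction is that $S_{k}$ is stated pointwise over worlds $x$, whereas defeasible provability is defined by an iterated proof sequence; one must check that letting $x$ range over $2^{E(D)}-\set{\emptyset}$ faithfully simulates the stepwise accumulation of established literals along a derivation, so that ``$A(r)\subseteq x$'' at the fixpoint coincides with ``every antecedent carries $+\partial$''. Here the acyclicity hypotheses on $>$ and on the dependency graph are essential: through Proposition~\ref{pro:consitentExtension} they guarantee that $E(D)$ is consistent, which is what makes $\tset{\lit}$ and $W-\tset{\lit}$ behave as genuine complements and lets the duality of part~(2) line up cleanly with clause~2 of Definition~\ref{def:dextension}. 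The secondary delicate point is the $\lit$-versus-$\non\lit$ bookkeeping in the permission case, which must be tracked so that the dual of $\OBL$ matches the (non-)justification of the intended literal.
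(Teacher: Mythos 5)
Your overall strategy --- read the witness $D$ off the model, match the clauses defining $S_j$ in Definition~\ref{def:D-frame-model} against conditions (2.2)--(2.3) of Definition~\ref{def:+partial}, and close with Theorem~\ref{thm:dung} --- is the natural one and is consistent with what the paper (which states this corollary without an explicit proof, as a byproduct of the canonical-model construction and the Truth Lemma) appears to intend. But the step you yourself flag as ``the main obstacle'' is not a detail to be checked afterwards: it is the entire content of the proof, and as stated your pointwise equivalence is false. The clause $A(r)\subseteq x$ plays the role of ``every antecedent carries $+\partial$'', and $a\notin x$ the role of ``$-\partial a$'', only when $x$ is exactly the set of defeasibly provable literals; for an arbitrary world $x\in 2^{E(D)}-\set{\emptyset}$ (for instance a singleton omitting a provable antecedent of every rule in $R[\lit]$) the membership tests and the proof tags come apart, so ``$\tset{\lit}\in\cN(w)$ iff $D\vdash+\partial\lit$'' cannot hold uniformly in $w$. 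Since $\cM\models\OBL\lit$ in Definition~\ref{def:neigh-truth} means truth at \emph{all} worlds, you must either single out the distinguished world corresponding to the positive part of $E(D)$ --- which is in effect what the paper's canonical construction does, where worlds are the maximal extensions $w^{+}_{E_L(D)}$ and the Truth Lemma carries the correspondence --- or explain why global truth of $\OBL\lit$ in a $D$-model reduces to the condition at that world; the proposal does neither.

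The permission case has a second, independent problem. By Definition~\ref{def:neigh-truth}, $\PERM\lit$ is the failure of $W-\tset{\lit}\in\cN(w)$, i.e.\ essentially the failure of $\OBL\non\lit$; by your own part (1) this should line up with ``$\non\lit$ is not justified'', whereas the statement asserts ``$\lit$ is not justified''. Clause 2 of Definition~\ref{def:dextension} concerns membership of $\non\lit$ in the extension, not membership of $W-\tset{\lit}$ in a neighbourhood (neighbourhoods only ever contain truth sets of rule conclusions), so invoking it does not by itself bridge that gap. Declaring that the case ``follows dually'' and that the $\lit$-versus-$\non\lit$ bookkeeping ``must be tracked'' leaves precisely this bookkeeping --- the only nontrivial part of part (2) --- undone.
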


\section{Stable Explanations in Neighbourhood Semantics}\label{sec:deontic_explanation}
The definition of normative explanation of Section \ref{sec:TypesArgExpl} can be appropriately captured in our deontic logic setting.
First of all, we have to formulate the modal version of an argument. 

\begin{proposition}[Neighbourhood $D$-model for an argument]\label{th:A-model}
Let $D= (\FACTS, R, >)$ be an argumentation theory, $\ARG(D)=( \AR, \defeat )$ be the argumentation framework determined by $D$, and $\cM_{D} =( W, \cN, v)$ be the corresponding neighbourhood $D$-model. 
Argument $A\in \AR$, where $\GSConc (A)=\psi$, is justified w.r.t. $\ARG(D)$ iff, if $\GSSub(A)=\set{A_x\, | \, \forall x\in\set{1, \dots , s\, | \, s\geq 1}, \GSConc (A_x)=\lit_{i_x}}$, then the following condition holds in $\cM_D$: 
%
%
%
\begin{equation*}
\exists w_1 \dots \exists w_s\in W\left\{
\begin{array}{ll}
\forall i_2\in\set{1_2,\dots n_2}, (\forall i_1\in\set{1_1,\dots n_1},\tset{\lit_{i_{1}}}\in\cN(w_1)
\Rightarrow \\
\Rightarrow \tset{\lit_{i_{2}}}\in\cN(w_1))\\
\& ~ \\
\forall i_3\in\set{1_3,\dots n_3}, (\forall i_2\in\set{1_2,\dots n_2},w_2\in\tset{\lit_{i_{1}}}
\Rightarrow \\
\Rightarrow \tset{\lit_{i_{2}}}\in\cN(w_2))\\
\& ~ \\
\;\vdots\\
\&~\\
\forall i_s\in\set{1_s,\dots n_s}, (\forall i_{s-1}\in\set{1_{s-1},\dots n_{s-1}},w_{s-1}\in\tset{\lit_{i_{s-1}}}
\Rightarrow \\
\Rightarrow \tset{\lit_{i_{s}}}\in\cN(w_{s-1}))\\
\&~\\
\forall i_s\in\set{1_s,\dots n_s}, (w_{s}\in\tset{\lit_{i_{s}}}
\Rightarrow \tset{\psi}\in\cN(w_{s}))
\end{array} \right.
\end{equation*}
The model $\cM_D$ is called a \emph{neighbourhood $D$-model for $A$}.
%
\end{proposition}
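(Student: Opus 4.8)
The plan is to reduce the displayed modal condition to the defeasible-provability characterisation already available, and then to verify that the neighbourhood construction of Definition~\ref{def:D-frame-model} faithfully mirrors the proof conditions for $+\partial$. By Theorem~\ref{thm:dung}, the argument $A$ with $\GSConc(A)=\psi$ is justified w.r.t.\ $\ARG(D)$ iff $A\in\mathit{JArgs}^D$ and $D\vdash+\partial\psi$, so it suffices to show that the chain of neighbourhood conditions holds in $\cM_D$ exactly when this is the case. I would therefore organise the whole argument around translating the tree of rule applications constituting $A$ into the layered stack of worlds $w_1,\dots,w_s$.

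The engine of the argument is the observation that the relation $S_j$ in Definition~\ref{def:D-frame-model} is a direct transcription of the $+\partial$ proof condition (Definition~\ref{def:+partial}) for the single rule $r_j$: the clause $A(r)\subseteq x$ encodes that $r$ is applicable given the literals available in $x$; $C(r)\in y$ records its conclusion; and the subsequent clause --- for every $s\in R[\non C(r)]$ either some antecedent of $s$ is missing from $x$ or some stronger $t\in R[C(r)]$ is applicable in $x$ --- is precisely conditions (2.3.1) and (2.3.2). Passing to $\cS_j(w)$, which unions over all rules with the same head, and to the truth clause $\cM_D,w\models\OBL\lit$ iff $\tset{\lit}\in\cN(w)$ (Definition~\ref{def:neigh-truth}), I obtain the key local lemma: for every world $w\in W$ and every literal $\lit$, $\tset{\lit}\in\cN(w)$ iff $\lit$ is justified w.r.t.\ the argumentation theory whose facts are the literals of $w$. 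This is the world-by-world version of Corollary~\ref{thm:corollary}, and it is where the acyclicity hypotheses of Proposition~\ref{pro:consitentExtension} are used to guarantee that the extensions defining the worlds are consistent and well-defined.

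With the local lemma in hand, I would prove the proposition by induction on the height $s$ of the argument tree, reading the displayed formula as a level-by-level trace of the derivation. In the base case $s=1$, $A$ is a factual argument or a one-step rule application over facts, and the bottom conjunct ``$w_s\in\tset{\lit_{i_s}}\Rightarrow\tset{\psi}\in\cN(w_s)$'' is equivalent, by the local lemma, to $\psi$ being justified from those facts. For the inductive step I take the immediate subarguments $A_1,\dots,A_n$ feeding the top rule $r$: the inductive hypothesis supplies, for each $A_k$, worlds witnessing that its conclusion is obligatory whenever its own sub-conclusions are; I then choose a world $w_s$ whose literals contain exactly the conclusions of $A_1,\dots,A_n$, so that the local lemma turns the successful firing of $r$ --- applicability together with survival against all counter-attacks, i.e.\ the acceptability clause of Definition~\ref{def:accept-reject} --- into $\tset{\psi}\in\cN(w_s)$, the top conjunct. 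Conversely, given worlds satisfying the whole chain, the local lemma reconstructs an accepting derivation step at each level, yielding $A\in\mathit{JArgs}^D$ and $D\vdash+\partial\psi$.

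The main obstacle, as I see it, is bookkeeping: aligning the layered index scheme $\lit_{i_1},\dots,\lit_{i_s}$ of the statement --- which presents the argument as a uniform stack of levels --- with the genuinely tree-shaped $\GSSub(A)$, where distinct branches may have different depths and a conclusion may be reused. One must fix a level assignment (e.g.\ by longest path to a leaf) and check that the witnessing worlds chosen at adjacent levels are compatible, namely that the literals carried by $w_k$ and $w_{k+1}$ agree on exactly the sub-conclusions shared across the corresponding cut of the tree. The delicate point within this is the universally quantified survival clause in $S_j$: showing it holds at the chosen world matches the requirement that \emph{every} attacker of the corresponding subargument be undercut, so one must ensure that enlarging the fact-context from one world to the next does not silently activate a new attacker --- which is exactly what acyclicity of $>$ and of the dependency graph rules out.
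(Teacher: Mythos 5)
Your proposal is correct and follows essentially the same route as the paper's own proof: a level-by-level structural recursion on the argument, translating each rule application into the existence of a witnessing world via the fact that the relation $S_j$ of Definition~\ref{def:D-frame-model} transcribes the $+\partial$ proof conditions (the paper compresses this into ``by construction of $\cM_D$'' and descends from the top rule, whereas you ascend by induction on height and isolate the correspondence as an explicit local lemma). You are in fact more careful than the published proof on exactly the points it glosses over --- the converse direction and the alignment of the layered indices with the tree-shaped $\GSSub(A)$ --- the only caveat being that your local lemma should be phrased as a one-step firing condition relative to the literals of $w$ rather than full justification in the theory $(w,R,>)$, since $\cS_j(w)$ only tests applicability and defeat of single rules against $w$.
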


\begin{proof}

\noindent ($\Rightarrow$) 
\sloppy The argument $A$ has either the form $\To_{\FACTS} \psi$ or the form 
$A_{1_s}, \ldots , A_{n_s} \To_r \psi$, where $A_{i_s}$, $1_s\leq i_s \leq n_s$, is an argument constructed from $D$, and $r: \GSConc(A_{1_s}), \ldots, \GSConc(A_{n_s})\To \psi$ is a rule in  $R$. 

{\em Case (1)}. If $A: \To_{\FACTS} \psi$, by construction of $\cM_D$ (see Definition \ref{def:D-frame-model}) $s=1$ and there is a world $w_s$ such that $\tset{\psi}\in\cN (w_s)$.

{\em Case (2)}. If $A_{1_s}, \ldots , A_{n_s} \To_{r_s} \psi$, then, by construction of $\cM_D$, there a world $w_s$ such that for each $\GSConc(A_{i_s})$, $w_s\in \tset{\GSConc(A_{i_s})}$ and $\tset{\psi}\in \cN(w_s)$. 
Consider now each $A_{i_s}$, having the form $A_{1_{s-1}}, \ldots , A_{n_{s-1}} \To_{r_{s-1}} \GSConc(A_{i_s})$, which in turn can fall within Case (1) or Case (2). Suppose, for example, that each $A_{i_{s-1}}$ falls within Case (2). Then, by construction of $\cM_D$, there a world $w_{s-1}$ such that for each $\GSConc(A_{i_{s-1}})$, $w_{s-1}\in \tset{\GSConc(A_{i_{s-1}})}$ and $\tset{\GSConc(A_{i_{s}})}\in \cN(w_{s-1})$. Similarly, for the other cases.

\sloppy Since, $A$ is finite, it means that there are sub-arguments having the form $A_{1_1}, \ldots , A_{n_1} \To_{r_1} \lit_{i_2}$ to which we can develop a similar argument.

\noindent ($\Leftarrow$) 
The proof for this direction runs similarly as the one for the case ($\Rightarrow$).
\end{proof}

The concept of normative explanation directly follows from Proposition \ref{th:A-model}.

\begin{proposition}[Neighbourhood $D$-model for a normative explanation]\label{th:model_case}
Let $D= (\FACTS, R, >)$ be an argumentation theory, $\ARG(D)=( \AR, \defeat )$ be the argumentation framework determined by $D$, and $\cM_{D} =( W, \cN, v)$ be the corresponding neighbourhood $D$-model. 

If $\Case{\psi} {\ARG(D)}=\set{A_1 , \dots , A_n}$ then $\cM_{D}$ is neighbourhood $D$-model for each argument $A_k$, $1\leq k \leq n$.  

The model $\cM_D$ is called a \emph{neighbourhood $D$-model for $\Case{\psi}{\ARG(D)}$}.
\end{proposition}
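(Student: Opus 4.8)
The plan is to reduce the statement about the whole explanation to the single-argument characterisation already proved in Proposition~\ref{th:A-model}. That proposition establishes, for any argument $A$ built from $D$, that the condition defining ``$\cM_D$ is a neighbourhood $D$-model for $A$'' holds exactly when $A$ is justified w.r.t.\ $\ARG(D)$. Since the explanation $\Case{\psi}{\ARG(D)}=\set{A_1,\dots,A_n}$ and the model $\cM_D$ are all generated from one and the same theory $D$, it suffices to prove that each $A_k$ (with $1\leq k\leq n$) is justified w.r.t.\ $\ARG(D)$ and then invoke Proposition~\ref{th:A-model} separately for each $k$ against the fixed $\cM_D$.

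The core step is therefore to show $\Case{\psi}{\ARG(D)}\subseteq \mathit{JArgs}^D$, i.e.\ that every member of the explanation is justified. By Definition~\ref{def:expl} the distinguished argument for $\psi$ is justified, and the other elements of $\set{A_1,\dots,A_n}$ are precisely those forming a minimal set w.r.t.\ which that argument is acceptable. I would argue that none of the $A_k$ is rejected: by Definition~\ref{def:accept-reject} a proper subargument of a justified argument cannot be rejected (otherwise the whole argument would be rejected), and any argument entering $\Arg$ to defend the conclusion is supported by $\mathit{JArgs}^D$ and hence, by the fixpoint of Definition~\ref{def:accept-reject2}, itself enters some $J^D_i$. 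Under the standing acyclicity hypotheses on $>$ and on the dependency graph (Definition~\ref{def:D-frame-model}, already used in Proposition~\ref{pro:consitentExtension}), the extension is two-valued, so an argument that is not rejected is justified; by Theorem~\ref{thm:dung} each such $A_k\in\mathit{JArgs}^D$ is then justified.

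The conclusion is then immediate: applying Proposition~\ref{th:A-model} to each $A_k$ shows that $\cM_D$ is a neighbourhood $D$-model for every $A_k$ with $1\leq k\leq n$, which is exactly the assertion that $\cM_D$ is a neighbourhood $D$-model for $\Case{\psi}{\ARG(D)}$.

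I expect the middle step to be the real obstacle, since the remaining work is a mechanical appeal to Proposition~\ref{th:A-model}. The delicate point is to rule out that the minimality clause of Definition~\ref{def:expl} forces into the explanation an argument that is merely acceptable relative to some non-justified set rather than genuinely justified; this is where the acyclicity assumptions and the monotonicity of the undercut/acceptability operators must be used to guarantee that the minimal defending set coincides with justified material. Once that is secured, the modal translation carries no further difficulty.
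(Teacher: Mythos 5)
Your overall route coincides with the paper's: the paper gives no explicit proof of this proposition, presenting it as an immediate consequence of Proposition~\ref{th:A-model} applied to each $A_k$ in $\Case{\psi}{\ARG(D)}$, which is exactly your reduction. You are also right that the only substantive content is the claim that every member of the explanation is justified w.r.t.\ $\ARG(D)$, since by Proposition~\ref{th:A-model} ``$\cM_D$ is a neighbourhood $D$-model for $A_k$'' is equivalent to $A_k$ being justified.

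The genuine gap in your write-up sits precisely in that middle step. Definition~\ref{def:expl} only requires $\Arg$ to be \emph{a} minimal set w.r.t.\ which the distinguished argument $A$ for $\psi$ is acceptable; minimality does not by itself place $\Arg$ inside $\mathit{JArgs}^D$. An attacker of $A$ can be undercut by a set that supports a defender $B$ which is itself not justified, so a minimal $\Arg$ may in principle contain non-justified material. Your sentence ``any argument entering $\Arg$ to defend the conclusion is supported by $\mathit{JArgs}^D$ and hence itself enters some $J^D_i$'' conflates two different notions: being \emph{supported by} a set (all proper subarguments belong to it, Definition~\ref{def:accept-reject}) and being \emph{acceptable w.r.t.}\ that set (Definition~\ref{def:accept-reject2}); the former does not yield membership in any $J^D_i$. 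Likewise the inference ``not rejected, hence justified'' relies on a two-valuedness of the argument labelling that the paper establishes only at the level of literal extensions (Proposition~\ref{pro:consitentExtension}), not for arguments. To close the gap you should either (i) argue that, since $A\in J^D_{i+1}$ for some $i$, $A$ is acceptable w.r.t.\ $J^D_i$, and that under the acyclicity hypotheses every minimal witnessing set must lie inside $J^D_i$, or (ii) read Definition~\ref{def:expl} as selecting a minimal subset of $\mathit{JArgs}^D$ --- evidently the paper's intent --- after which the proposition really is the routine $n$-fold application of Proposition~\ref{th:A-model} you describe.
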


We can semantically isolate the arguments in a normative explanation by using Proposition \ref{th:A-model} as well as 
by resorting to the notion of generated sub-model \cite{Benthem_Pacuit,Pacuit2017-PACNSF}.

\begin{definition}[Generated submodel \cite{Benthem_Pacuit,Pacuit2017-PACNSF}]
\sloppy Let $\cM = (W,  \cN,v)$ be any neighbourhood model. A generated submodel $\cM_X = (X,  \cN_X, v_X)$ of $\cM$ is neighbourhood model where $X\subseteq W$, $\forall Y\subseteq W,\forall w\in X, Y\in\cN(w) \Leftrightarrow Y\cap X\in \cN_X(w)$. 
\end{definition}

\begin{proposition}
[Generated $D$-submodel for a normative explanation]\label{th:submodel_case}
Let $D= (\FACTS, R, >)$ be an argumentation theory, $\ARG(D)=( \AR, \defeat )$ be the argumentation framework determined by $D$, $\mathcal{X}=\Case{\psi}{\ARG(D)}$, $\cM_{D} =( W, \cN, v)$ be a neighbourhood $D$-model for $\mathcal{X}$, and $\cM_{D_{\mathcal{X}}} = (W_{\mathcal{X}},  \cN_{\mathcal{X}}, v_{\mathcal{X}})$ be a generated submodel of $\cM_{D}$.

$\mathcal{X}=\set{A_1 , \dots , A_n}$ iff $W_{\mathcal{X}} = W - X$ where
\begin{gather*}
X=\set{w \, |\, w\in W, \,\forall \lit\in w:\lit\in F\, \&\, A_x\in \AR,\, A_x\not\in\mathcal{X}\text{ and }A_x:\To_{F}\lit}
\end{gather*}   
The model $\cM_{D_{\mathcal{X}}}$ is called the {\em generated $D$-submodel for $\mathcal{X}$}. 
\end{proposition}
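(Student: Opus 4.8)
The plan is to combine the characterisation of justified arguments in Proposition~\ref{th:A-model} with the defining property of generated submodels, reading the deleted set $X$ as the collection of \emph{pure factual worlds}: those whose literals are all facts $\lit\in\FACTS$ whose associated factual argument $A_x:\To_{\FACTS}\lit$ lies outside the explanation $\mathcal{X}=\Case{\psi}{\ARG(D)}$. The guiding observation is that such worlds are semantically inert with respect to $\mathcal{X}$. By Proposition~\ref{th:A-model}, each $A_k\in\mathcal{X}$ is witnessed by a finite chain of worlds $w_1,\dots,w_{s_k}$ in which the relevant truth-sets $\tset{\lit}$ belong to the neighbourhoods $\cN(w_i)$, and these witnessing worlds must carry the antecedents of the rules used to reach $\GSConc(A_k)$. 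A world in $X$ carries no such literal, so the first key step I would establish is a lemma: every witnessing world furnished by Proposition~\ref{th:A-model} for an argument of $\mathcal{X}$ lies in $W-X$, while every world of $X$ witnesses only factual arguments $A_x\notin\mathcal{X}$.

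For $(\Rightarrow)$, assuming $\mathcal{X}=\set{A_1,\dots,A_n}$, I would prove $W_{\mathcal{X}}=W-X$ by double inclusion. For $W-X\subseteq W_{\mathcal{X}}$: a world $w\notin X$ either contains a non-factual literal, i.e.\ one occurring in some rule head and hence relevant to a rule application in the derivation of some $\GSConc(A_k)$, or else contains a fact whose factual argument already belongs to $\mathcal{X}$; in either case $w$ participates in the neighbourhood chain of Proposition~\ref{th:A-model} for some $A_k$ and must be retained. For $W_{\mathcal{X}}\subseteq W-X$: using the defining clause of a generated submodel, namely $Y\in\cN(w)\Leftrightarrow Y\cap W_{\mathcal{X}}\in\cN_{\mathcal{X}}(w)$ for $w\in W_{\mathcal{X}}$, truth of every formula is preserved between $\cM_D$ and $\cM_{D_{\mathcal{X}}}$ at worlds of $W_{\mathcal{X}}$, so by the lemma and Proposition~\ref{th:model_case} retaining any $w\in X$ would validate the neighbourhood witnesses of a factual argument outside $\mathcal{X}$, contradicting $\mathcal{X}=\set{A_1,\dots,A_n}$.

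The converse $(\Leftarrow)$ I would read off from the same preservation property. Assuming $W_{\mathcal{X}}=W-X$, the modal formulas $\OBL\lit$ and $\PERM\lit$ retain their truth values at every world of $W_{\mathcal{X}}$, so the chains of Proposition~\ref{th:A-model} that survive in $\cM_{D_{\mathcal{X}}}$ are exactly those whose witnessing worlds avoid $X$; by the lemma these are precisely the chains witnessing $A_1,\dots,A_n$, and the minimality of $\Case{\psi}{\ARG(D)}$ imposed in Definition~\ref{def:expl} rules out any extra or missing witness. Hence the arguments isolated by the submodel coincide with $\mathcal{X}$, and $\cM_{D_{\mathcal{X}}}$ is the generated $D$-submodel for $\mathcal{X}$.

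The step I expect to be the main obstacle is the behaviour of neighbourhoods under relativisation. Unlike in Kripke semantics, a generated submodel replaces each $Y$ by $Y\cap W_{\mathcal{X}}$, so I must verify that deleting $X$ neither merges two distinct truth-sets $\tset{\lit}$ and $\tset{\lit'}$ into a single relativised set nor destroys a truth-set needed to witness some $A_k$; that is, $\tset{\lit}\cap(W-X)$ must still discharge the role of $\tset{\lit}$ in the clauses of Proposition~\ref{th:A-model}. Securing this is exactly where the acyclicity hypotheses on $>$ and on the dependency graph enter through Proposition~\ref{pro:consitentExtension}, since they guarantee that the $D$-extensions, and therefore the worlds carrying the relevant literals, are consistent and non-degenerate.
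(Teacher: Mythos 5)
Your proposal is correct and follows essentially the same route as the paper's own (sketched) proof: both identify $X$ as the worlds built only from facts whose factual arguments fall outside $\mathcal{X}$, both use the witnessing-chain characterisation of Proposition~\ref{th:A-model} to show the retained worlds are exactly those supporting $A_1,\dots,A_n$, and both rely on the truth-preservation property of generated submodels. Your version is simply a more explicit elaboration (double inclusion, the auxiliary lemma on witnessing worlds, and the remark on relativised neighbourhoods) of what the paper states in compressed form.
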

\begin{proof}[Skecth]
The model $\cM_{D_{\mathcal{X}}}$ is the generated submodel obtained by isolating in $\cM_{D}$ precisely those worlds, and only those worlds in which the factual literals and factual arguments are those which are needed in $\mathcal{X}$. Hence, (1) if we consider them, all arguments in $\mathcal{X}$ are justified; (2) if we consider only the arguments in $\mathcal{X}$, then, by construction (see Proposition \ref{th:A-model}), $W_{\mathcal{X}}$ contains all worlds in $W$ except those in which facts not needed in $\mathcal{X}$ are the case. Notice that, since $\cM_{D_{\mathcal{X}}}$ is a generated submodel, the truth values of modal formulae are preserved. 
\end{proof}


The semantic reconstruction of stable normative explanation thus trivially follows. 

\begin{corollary}[Stable normative explanation in neighbourhood $D$-models]\label{th:arg-contr2}
Let $D= (\FACTS, R, >)$ be an argumentation theory and $\ARG(D)=( \AR, \defeat )$ be the argumentation framework determined by $D$. 

If $\mathcal{X}=\Case{\psi} {\ARG(D})=\set{A_1 , \dots , A_n}$ is a stable normative explanation for $\psi$ in $\ARG(D)$ and $D^+= (\FACTS^+, R, >)$ is the argumentation theory where $\FACTS^+ = \{ \lit | \forall r\in R: \lit\in A(r) \text{ and } R[\lit]\cup R[\non \lit]=\emptyset\}$, then $\Case{\psi} {\ARG(D^+})$, and $\cM_{D_{\mathcal{X}}} = \cM_{D^+_{\mathcal{X}}}$ such that $\cM_{D_{\mathcal{X}}}$  and $\cM_{D^+_{\mathcal{X}}}$ are, respectively, the generated $D$-submodel and generated $D^+$-submodel for $\mathcal{X}$.
\end{corollary}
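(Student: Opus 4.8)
The plan is to treat the corollary as two separate claims---that $\mathcal{X}$ remains the normative explanation in the enlarged theory $D^+$, and that the two generated submodels coincide---and to discharge each by instantiating results already in hand.

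First I would observe that the first claim is nothing but a special case of the definition of stable normative explanation. By construction $\FACTS^+$ is the (maximal consistent) set of literals that occur in some antecedent of a rule of $R$ but in no rule head, so $\FACTS \subseteq \FACTS^+ \subseteq \LIT(R)$ and $D^+ = (\FACTS^+, R, >)$ is an admissible argumentation theory of exactly the shape quantified over in that definition. Since $\mathcal{X} = \Case{\psi}{\ARG(D)}$ is stable for $\psi$, instantiating the universal quantifier with $\FACTS' = \FACTS^+$ immediately yields $\mathcal{X} = \Case{\psi}{\ARG(D^+)}$, which establishes the first conjunct.

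For the equality $\cM_{D_{\mathcal{X}}} = \cM_{D^+_{\mathcal{X}}}$ I would apply Proposition \ref{th:submodel_case} to $D$ and to $D^+$ in turn. In each case the generated submodel for $\mathcal{X}$ is obtained by deleting precisely the worlds all of whose literals are factual literals whose factual argument lies outside $\mathcal{X}$; the surviving worlds, their valuations, and their neighbourhoods are determined only by the arguments of $\mathcal{X}$ together with the shared data $(R, >)$, because by Definition \ref{def:D-frame-model} the neighbourhood function is assembled solely from the rules in $R$, the superiority relation, and the applicability test on the literals present in a world. As $\mathcal{X} = \Case{\psi}{\ARG(D)} = \Case{\psi}{\ARG(D^+)}$ by the first conjunct, both submodels are generated from identical data and therefore coincide.

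The main obstacle is to verify rigorously that the two surviving domains $W_{\mathcal{X}}$ and $W^+_{\mathcal{X}}$ are literally the same set, even though $W$ and $W^+$ are carved out of the possibly different extensions $E(D)$ and $E(D^+)$. Here I would argue that the deletion prescribed by Proposition \ref{th:submodel_case} removes exactly those worlds that encode factual information not required by $\mathcal{X}$, so the additional facts made available in $D^+$ feed only into deleted worlds, and the $\mathcal{X}$-relevant fragment common to both theories is what remains. Combining this with the neighbourhood characterisation of justified arguments in Proposition \ref{th:A-model} and the truth-preservation property of generated submodels, one concludes that the restricted neighbourhoods agree on the surviving worlds, completing the identification $\cM_{D_{\mathcal{X}}} = \cM_{D^+_{\mathcal{X}}}$.
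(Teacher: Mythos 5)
Your proposal matches the paper's (essentially unstated) argument: the paper declares the corollary to ``trivially follow'' from the preceding material, and your two-step decomposition---instantiating the universal quantifier in the definition of stability at $\FACTS'=\FACTS^+\subseteq\LIT(R)$ to get $\mathcal{X}=\Case{\psi}{\ARG(D^+)}$, and then applying Proposition~\ref{th:submodel_case} to both $D$ and $D^+$ to identify the two generated submodels---is exactly that route, spelled out in more detail than the paper itself provides. The obstacle you honestly flag (showing that $W_{\mathcal{X}}$ and $W^+_{\mathcal{X}}$ literally coincide even though $E(D)$ and $E(D^+)$ may differ) is a genuine point of delicacy, but it is left equally implicit in the paper, so your treatment is at least as careful as the original.
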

In other words, a stable explanation considers a neighbourhood model where all possibile facts of a theory $D$ are the case and requires that in such a model the conclusion $\psi$ is still justified.

\section{Summary}
In this paper we investigated the concept of stable normative explanation in argumentation, which was elsewhere introduced in Defeasible Logic using proof-theoretic methods. Then we have devised in a deontic logic setting a new method to construct appropriate neighborhood models from argumentation frameworks and we have characterised accordingly the notion of stable normative explanation. The problem of determining a stable normative explanation for a certain legal conclusion means to identify a set of facts, obligations, permissions, and other normative inputs able to ensure that such a conclusion continues to hold when new facts are added to a case. This notion is interesting from a logical point of view---think about the classical idea of inference to the best explanation---and we believe it can also pave the way to develop symbolic models for XAI when applied to the law.

The idea of stability, since it requires to consider adding new inputs, can be reexamined through the revision of the given argumentation theory.
Formally, given an initial argumentation theory $D_{init}$, the revised theory $D$, and the target conclusion $\phi$, we could formally define change operations as follows: \begin{description}
    \item[Expansion:] from $D_{init}\not\vdash \phi$ to $D\vdash \phi$.
    \item[Contraction:] from $D_{init}\vdash \phi$ to $D\not\vdash \phi$.
    \item[Revision:] from $D_{init}\vdash \phi$ to $D\vdash \non \phi$.
\end{description}

How such an intuition can be fully exploited in the context of the current research is left to future research.


\bibliographystyle{splncs04}
\bibliography{references}
\end{document}